\newcommand{\bproof}{\begin{proof}}
\newcommand{\eproof}{\end{proof}}
\DeclareMathOperator\rank{rank}
\DeclareMathOperator\trace{tr}
\DeclareMathOperator\Span{span}
\begin{document}
\bibliographystyle{plainnat}
\makeatletter

\renewcommand{\theequation}{\thesection.\arabic{equation}}
\numberwithin{equation}{section}
\renewcommand{\bar}{\overline}
\newtheorem{theorem}{Theorem}[section]
\newtheorem{question}{Question}[section]
\newtheorem{proposition}{Proposition}[section]
\newtheorem{lemma}{Lemma}[section]
\newtheorem{corollary}{Corollary}[section]
\newtheorem{definition}{Definition}[section]
\newtheorem{problem}{\em Problem}[section]
\newtheorem{remark}{Remark}[section]
\newtheorem{example}{Example}[section]
\newtheorem{case}{Case}[section]
\newtheorem{assumption}{Assumption}[section]

\renewcommand\proofname{\bf Proof} %Change proofname to bold
\def\eop{$\rule{1.3ex}{1.3ex}$}
\renewcommand\qedsymbol\eop  % To end a proof by a black square
\numberwithin{equation}{section}
\makeatletter

%%%%%%%
% Local definitions
%%%%%%%
\newcommand{\XXX}{{\bf XXX~}}
\newcommand{\beq}{\begin{equation}} \newcommand{\eeq}{\end{equation}}
\newcommand{\bz}{{\bf z}}
\newcommand{\bx}{{\bf x}}
\newcommand{\bt}{{\bf t}}
\newcommand{\bi}{\begin{itemize}}
\newcommand{\be}{\begin{enumerate}}
\newcommand{\ei}{\end{itemize}}
\newcommand{\ee}{\end{enumerate}}
\newcommand{\calH}{{\cal H}}
\newcommand{\E}{{\cal E}}
\newcommand{\Om}{{\Theta}}
\newcommand{\om}{{\theta}}
\newcommand{\gam}{{\gamma}}
\newcommand{\Gam}{{\Gamma}}
\newcommand{\homega}{{\hat \om}}
\newcommand{\bS}{{\mathbb S}}
\newcommand{\R}{{\mathbb R}}
\newcommand{\N}{{\mathbb N}}
\newcommand{\calE}{{\cal E}}
\newcommand{\calG}{{\cal G}}
\newcommand{\calV}{{\cal V}}
\newcommand{\calK}{{\cal K}}
\newcommand{\calN}{{\cal N}}
\newcommand{\calU}{{\cal U}}
\newcommand{\calT}{{\cal T}}
\newcommand{\calY}{{\cal Y}}
\newcommand{\calO}{{\cal O}}
\newcommand{\calX}{{\cal X}}
\newcommand{\calW}{{\cal W}}
\newcommand{\W}{{\cal W}}
\newcommand{\G}{{\cal G}}
\newcommand{\K}{{\cal K}}
\newcommand{\OO}{{\bf O}}
\newcommand{\hK}{{\hat K}}
\newcommand{\X}{{\cal X}}
\newcommand{\M}{{\cal M}}
\newcommand{\KG}{{\cal K(\calG)}}
\newcommand{\lam}{{\lambda}}
\newcommand{\calM}{{\cal M}}
\newcommand{\calA}{{\cal A}}
\newcommand{\calB}{{\cal B}}
\newcommand{\calL}{{\cal L}}
\newcommand{\calD}{{\cal D}}
\newcommand{\calR}{{\cal R}}
\newcommand{\pp}{{\cal P}}
\newcommand{\hc}{{\hat c}}
\newcommand{\ck}{{c_K}}
\newcommand{\hL}{{\hat L}}
\newcommand{\tL}{{\bar L}}
\newcommand{\sK}{{\SSS K}}
\newcommand{\hg}{{g_K}}
\newcommand{\tf}{{f_K}}
\newcommand{\hy}{{y_K}}
\newcommand{\haty}{{\hat y}}
\newcommand{\hG}{{\hat \Gam}}
\newcommand{\vt}{{\vec t}}
\newcommand{\vv}{{\vec v}}
\newcommand{\lb}{{\langle}}
\newcommand{\rb}{{\rangle}}
\newcommand{\by}{{\bf y}}
\newcommand{\btau}{{\bf \tau}}
\newcommand{\bu}{{\bf u}}
\newcommand{\bv}{{\bf v}}
\newcommand{\tby}{\tilde{{\bf y}}}
\newcommand{\Sb}{{\bf S}}
\newcommand{\Mb}{{\bf M}}
\newcommand{\Ob}{{\bf O}}
\newcommand{\SSS}{\scriptscriptstyle}
\def\boldf#1{\hbox{\rlap{$#1$}\kern.4pt{$#1$}}}
\newcommand{\balpha}{{\boldf \alpha}}
\newcommand{\wh}{\hat w}
\newcommand{\Wh}{\hat W}
\newcommand{\wb}{\bar w}
\newcommand{\Wb}{\bar W}
\newcommand{\xb}{\bar x}
\newcommand{\cb}{\bar c}
\newcommand{\trans}{^{\scriptscriptstyle \top}}
\newcommand{\tW}{\tilde{W}}
\newcommand{\tw}{\tilde{w}}

\newcommand{\figsheight}{4.0cm}
\renewcommand\baselinestretch{1}

% mathtools.sty : display only referenced equation numbers
\mathtoolsset{showonlyrefs,showmanualtags}

\begin{titlepage}
\advance\topmargin by 0.5in
\begin{center}

\iffalse
Research Note RN/07/04

\vspace{.3truecm}

{\bf Department of Computer Science, University College London}
\fi

\vspace{1.5truecm} {\Large When is there a representer theorem? \\ 
Vector versus matrix regularizers
}

\vspace{1.2truecm}

\end{center}

\begin{center}

{\bf Andreas Argyriou}$^{(1)}$, {\bf Charles A. Micchelli}$^{(2)}$,
{\bf Massimiliano Pontil}$^{(1)}$

\vspace{.75truecm} \vspace{.75truecm}

(1) Department of Computer Science \\
University College London \\
Gower Street, London WC1E, England, UK \\
E-mail: {\em m.pontil@cs.ucl.ac.uk} \vspace{.75truecm}

\noindent (2) Department of Mathematics and Statistics\\
State University of New York \\
The University at Albany \\
1400 Washington Avenue, Albany, NY, 12222, USA \\
E-mail: {\em cam@math.albany.edu }

\vspace{.75truecm}

\today

\end{center}

\vspace{.5truecm}

\begin{abstract}
{\noindent We consider a general class of regularization methods which
learn a vector of parameters on the basis of linear measurements. It
is well known that if the regularizer is a nondecreasing function of
the inner product then the learned vector is a linear combination of
the input data. This result, known as the {\em representer theorem}, is at
the basis of kernel-based methods in machine learning. In this paper,
we prove the necessity of the above condition, thereby completing the
characterization of kernel methods based on regularization. We further
extend our analysis to regularization methods which learn a matrix, a
problem which is motivated by the application to multi-task
learning. In this context, we study a more general representer
theorem, which holds for a larger class of regularizers. We provide a
necessary and sufficient condition for these class of matrix
regularizers and highlight them with some concrete examples of
practical importance. Our analysis uses basic principles from matrix
theory, especially the useful notion of matrix nondecreasing function.
}
\end{abstract}
\end{titlepage}

%%%%%%%%%%%%%%%%%%%%%%%%%%%%%%%%

%%%%%%%%%%%%%%%%%%%%%%%%%%%%%%%%%%%%%%%%%%%%%%%%%%
% INTRO
%%%%%%%%%%%%%%%%%%%%%%%%%%%%%%%%%%%%%%%%%%%%%%%%%%

\section{Introduction}
\label{sec:intro}

Regularization in Hilbert spaces is an important methodology for
learning from examples and has a long history in a variety of
fields. It has been studied, from different perspectives, in
statistics \citep{wahba}, in optimal estimation \citep{micchelli} and
recently has been a focus of attention in machine learning theory --
see, for example, \citep{cucker,Devito,mp_jmlr,JST,vapnik} and
references therein. Regularization is formulated as an {\em
optimization problem} involving an {\em error term} and a {\em
regularizer}. The regularizer plays an important role, in that it
favors solutions with certain desirable properties. It has long been
observed that certain regularizers exhibit an appealing property,
called the {\em representer theorem}, which states that there exists a
solution of the regularization problem that is a linear combination of
the data \citep{wahba}. This property has important computational
implications in the context of regularization with positive
semidefinite {\em kernels}, because it makes high or
infinite-dimensional problems of this type into finite dimensional
problems of the size of the number of available data
\citep{scholkopf,JST}.

The topic of interest in this paper will be to determine the
conditions under which representer theorems hold. In the first half of
the paper, we describe a property which a regularizer should satisfy
in order to give rise to a representer theorem. It turns out that this
property has a simple geometric interpretation and that the
regularizer can be equivalently expressed as a {\em nondecreasing}
function of the Hilbert space norm. Thus, we show that this condition,
which has already been known to be sufficient for representer
theorems, is also {\em necessary}.  In the second half of the paper,
we depart from the context of Hilbert spaces and focus on a class of
problems in which a {\em matrix structure} plays an important role. For
such problems, which have recently appeared in several machine
learning applications, we show a modified version of the representer
theorem that holds for a class of regularizers significantly larger
than in the former context.  As we shall see, these matrix
regularizers are important in the context of multi-task learning: the
matrix columns are the parameters of different regression tasks and
the regularizer encourages certain dependences across the tasks.

\bigskip

In general, we consider problems in the framework of {\em Tikhonov
regularization} \citep{tikhonov}. This regularization approach
receives a set of input/output data $(x_1,y_1),\dots,$ $(x_m,y_m) \in
\calH \times \calY$ and selects a vector in $\calH$ as the solution of an
optimization problem. Here, $\calH$ is a prescribed Hilbert space
equipped with the inner product $\lb \cdot, \cdot \rb$ and $\calY
\subseteq \R$ a set of possible output values. The optimization
problems encountered in regularization are of the type
\beq
\min\left\{ \E \bigl( \left( \lb w,x_1\rb,\dots,\lb w,x_m \rb \right) ,
\left( y_1, \dots, y_m \right) \bigr) +
\gamma \, \Omega(w): w \in \calH \right\} \,,
\label{eq:reg_intro}
\eeq 
where $\gamma > 0$ is a regularization parameter. The function
$\E:\R^m \times \calY^m \rightarrow \R$ is called an {\em error
function} and $\Omega: \calH \rightarrow \R$ is called a {\em
regularizer}.  The error function measures the error on the
data. Typically, it decomposes as a sum of univariate functions. For
example, in regression, a common choice would be the sum of square
errors, $\sum_{i=1}^m (\lb w,x_i\rb-y_i)^2$. The function $\Omega$,
called the regularizer, favors certain regularity properties of the
vector $w$ (such as a small norm) and can be chosen based on available
prior information about the target vector. In some
Hilbert spaces such as Sobolev spaces the regularizer is measure of
smoothness: the smaller the norm the smoother the function.

This framework includes several well-studied learning algorithms, such
as ridge regression \citep{Hoerl}, support vector machines
\citep{Guyon}, and many more -- see \citep{scholkopf,JST} and references therein.

An important aspect of the practical success of this approach is the
observation that, for certain choices of the regularizer, solving
\eqref{eq:reg_intro} reduces to identifying $m$ parameters and not
$\dim(\calH)$. Specifically, when the regularizer is the square of the
Hilbert space norm, the representer theorem holds: there
exists a solution ${\hat w}$ of \eqref{eq:reg_intro} which is a linear
combination of the input vectors,
\beq 
{\hat w} = \sum_{i=1}^m c_i x_i,
\label{eq:RT} 
\eeq
where $c_i$ are some real coefficients. This result is simple to prove
and dates at least from the 1970's, see, for example, \citep{KW}. It is also known that it
extends to any regularizer that is a {\em nondecreasing} function of
the norm \citep{SHS}. Several other variants and results about the
representation form \eqref{eq:RT} have also appeared in recent years
\citep{Devito,Nicolao,EPP,GJP,vector,steinwart}. Moreover, the
representer theorem has been important in machine learning,
particularly within the context of learning in reproducing kernel
Hilbert spaces \citep{arons} -- see \citep{scholkopf,JST} and
references therein.
 
\bigskip
Our first objective in this paper is to derive necessary
and sufficient conditions for representer theorems to hold. Even
though one is mainly interested in regularization problems, it is more
convenient to study {\em interpolation} problems, that is,
problems of the form
%\beq 
%\min\left\{ \Omega(w): w \in \calH ,
%\lb w,x_i\rb = y_i,\dots,\lb w,x_m \rb = y_m
%\right\} \,.
%\label{eq:int_intro}
%\eeq 
\beq 
\min\left\{ \Omega(w): w \in \calH ,
\lb w,x_i\rb = y_i,~\forall i=1,\dots,m
\right\} \,.
\label{eq:int_intro}
\eeq 
Thus, we begin this paper (Section \ref{sec:reg_int}) by showing how
representer theorems for interpolation and regularization relate.  On
one side, a representer theorem for interpolation easily implies
such a theorem for regularization with the same regularizer and any
error function. Therefore, {\em all representer theorems obtained in
this paper apply equally to interpolation and regularization}.  On the
other side, though, the converse implication is true under
certain weak qualifications on the error function.

Having addressed this issue, we concentrate in Section
\ref{sec:vector} on proving that an interpolation 
problem \eqref{eq:int_intro} admits
solutions representable in the form \eqref{eq:RT} {\em if and only if}
the regularizer is {\em a nondecreasing function of the Hilbert space
norm}. That is, we provide a complete characterization of regularizers
that give rise to representer theorems, which had been an open
question. Furthermore, we discuss how our proof is motivated by a
geometric understanding of the representer theorem, which is
equivalently expressed as a monotonicity property of the regularizer.

\bigskip 
Our second objective is to formulate and study the novel question of
representer theorems for {\em matrix problems}. To make our discussion concrete, let us consider the problem of learning $n$ linear
regression vectors, represented by the parameters $w_1,\dots,w_n \in
\R^d$, respectively. Each vector can be thought of as a
``task'' and the goal is to {\em jointly} learn these $n$ tasks. In
such problems, there is usually prior knowledge that {\em relates}
these tasks and it is often the case that learning can improve if this
knowledge is appropriately taken into account. Consequently, a good
regularizer should favor such task relations and involve {\em all
tasks jointly}.

In the case of interpolation, this learning framework can be
formulated concisely as
\begin{equation}
\min \left\{ \Omega(W) : W \in \Mb_{d,n} \, , ~ w_t\trans x_{ti} = y_{ti} ~~ \forall i=1,\dots,m_t, ~ t=1,\dots,n
\right\} \,,
\label{eq:matrix_intro}
\end{equation}
where $\Mb_{d,n}$ denotes the set of $d\times n$ real matrices and the
column vectors $w_1,\dots,w_n \in \R^d$ form the matrix $W$. Each task 
$t$ has its own input data $x_{t1},\dots,x_{t m_t} \in \R^d$ and corresponding 
output values $y_{t1},\dots,y_{t m_t} \in \calY$. 

%In this context, we identify the
%$n$ columns of matrix $W$ with the vectors $w_1,\dots,w_n \in \R^d$,
%respectively and let the data matrices $X_i$ be rank one matrices with
%only one non-zero column, which contains a data point for the
%corresponding task. Formally we regard $i$ as the double index $(t,j)$
%and let $X_i = x_{t j} e_t\trans$, where $\{e_1,\dots,e_n\}$ is the
%standard basis for $\R^n$. So there are $m=pn$ such matrices, which
%are divided equally across the tasks. 

An important feature of such problems that distinguishes them from the
type \eqref{eq:int_intro} is the appearance of {\em matrix
products} in the constraints, unlike the inner products in
\eqref{eq:int_intro}. In fact, as we will discuss in Section 
\ref{sec:matrix_intro}, problems of the type \eqref{eq:matrix_intro}
can be written in the form \eqref{eq:int_intro}. Consequently, the
representer theorem applies if the matrix regularizer is a
nondecreasing function of the Frobenius norm\footnote{Defined as
$\|W\|_2 = \sqrt{\trace(W\trans W)}$.}. However, the optimal vector
${\hat w}_t$ for each task can be represented as a linear combination
of {\em only those input vectors corresponding to this particular
task}. Moreover, with such regularizers it is easy to see that each
task in \eqref{eq:matrix_intro} can be optimized independently. Hence,
these regularizers are of no practical interest if the tasks are
expected to be related.

This observation leads us to formulate a {\em modified representer theorem},
which is appropriate for matrix problems, namely,
\begin{align}
\label{eq:rep_matrix_intro}
&&&& {\hat w}_t = \sum_{s=1}^n \sum_{i=1}^{m_s} c^{(t)}_{si} x_{si} && \forall \, t=1,\dots,n ,
\end{align}
where $c^{(t)}_{si}$ are scalar coefficients, for $t,s=1,\dots,n, ~ i
=1,\dots,m_s$.  In other words, we now allow for {\em all input
vectors} to be present in the linear combination representing each column
of the optimal matrix. As a result, this definition greatly expands the class 
of regularizers that give rise to representer theorems.

Moreover, this framework can be applied to many applications where
matrix optimization problems are involved. Our immediate motivation, however, 
has been more specific than that, namely {\em multi-task learning}.   
Learning multiple tasks
jointly has been a growing area of interest in machine learning,
especially during the past few years
\citep{bach_theo,mtl_feat,ml,spectral,candes,CCG_08,Izenman,Maurer2,maurer,Srebro,Wolf,bennett_mtl,ming}.
For instance, some of these works use regularizers which involve
the {\em trace norm}\footnote{Equal to the sum of the
singular values of $W$.} of matrix $W$. The general idea behind this methodology is
that a small trace norm favors low-rank matrices. This means that the
tasks (the columns of $W$) are related in that they all lie in a
low-dimensional subspace of $\R^d$. In the case of the trace norm,
the representer theorem \eqref{eq:rep_matrix_intro} is known to hold
-- see \citep{bach_theo,ml,srebro_icml}, also discussed in Section \ref{sec:matrix_intro}.

%In this setting, \citep{ml} studies regularizers which involve the trace
%norm of matrix $W$, namely the sum of the singular values of $W$. We
%note that regularization with the trace norm was introduced in machine
%learning in \citep{Srebro} in the related problem of collaborative
%filtering. The general idea behind this methodology is that the trace
%norm favors matrices $W$ with low rank. This means that the tasks (the
%columns of $W$) are {\em related} in that they all lie in a low
%dimensional subspace of $\R^d$. Using a technique described in
%\citep{ml}, we show in Section  that if the regularizer in
% is an increasing function of the trace norm, then there is a 
%solution of problem () of the form \beq
%\label{eq:RT1} {\hat W} = \sum_{i=1}^m X_i C_i, \eeq where $C_i$ are
%some $n \times n$ coefficient matrices. Equation (\ref{eq:RT1}) says
%that each column of matrix $W$ is a linear combination of {\em all}
%the columns of the matrices $X_i$. This a representer theorem of a different type than that for 
%the Frobenius norm and it is crucial for the application to multi-task setting. 
%Indeed, in multi-task learning, the former
%representer theorem says that each task $w_t$ is a linear
%combination of all data points, whereas in the latter case it is a
%linear combination of its own training inputs only.
 
It is natural, therefore, to ask a question similar to that in the standard
Hilbert space (or single-task) setting. That is, under which
conditions on the regularizer a representer theorem holds. In Section
\ref{sec:matrix_proof}, we provide an answer by {\em proving a
necessary and sufficient condition for representer theorems to hold,
expressed as a simple monotonicity property}. This property is
analogous to the one in the Hilbert space setting, but its geometric
interpretation is now algebraic in nature.
We also give a functional description equivalent to this
property, that is, {\em we show that the regularizers of interest are
the matrix nondecreasing functions of the quantity $W\trans W$}.

Our results cover matrix problems of the type
\eqref{eq:matrix_intro} which have already been studied in the
literature. But they also point towards some new learning methods that
may perform well in practice and can now be made computationally
efficient. Thus, we close the paper with a discussion of possible
regularizers that satisfy our conditions and have been used or can be
used in the future in machine learning problems.

\subsection{Notation}

Before proceeding, we introduce the notation used in this paper. We
use $\N_d$ as a shorthand for the set of integers $\{1,\dots,d\}$.  We
use $\R^d$ to denote the linear space of vectors with $d$ real
components. The standard inner product in this space is denoted by
$\lb\cdot,\cdot\rb$, that is, $\lb w,v\rb = \sum_{i\in\N_d} w_i v_i,
~\forall w,v \in \R^d$, where $w_i,v_i$ are the $i$-th components of
$w,v$ respectively. More generally, we will consider Hilbert
spaces which we will denote by $\calH$, equipped with an inner product
$\lb\cdot,\cdot\rb$.

We also let $\Mb_{d,n}$ be the linear space of $d \times n$ real
matrices. If $W,Z \in \Mb_{d,n}$ we define their Frobenius inner
product as $\lb W,Z\rb = \trace(W\trans Z)$, where $\trace$ denotes
the trace of a matrix. With $\Sb^d$ we denote the set of $d\times d$
real symmetric matrices and with $\Sb^d_+$ ($\Sb^d_{++}$) its subset
of positive semidefinite (definite) ones. We use $\succ$ and $\succeq$
for the positive definite and positive semidefinite partial orderings,
respectively. Finally, we let $\OO^d$ be the set of $d \times d$
orthogonal matrices.

%%%%%%%%%%%%%%%%%%%%%%%%%%%%%%%%%%%%%%%%%%%%%%%%%%
% REG vs. INT.
%%%%%%%%%%%%%%%%%%%%%%%%%%%%%%%%%%%%%%%%%%%%%%%%%%

\section{Regularization versus Interpolation}
\label{sec:reg_int}

The line of attack we shall follow in this paper will go through {\em
interpolation}. That is, our main concern will be to obtain necessary
and sufficient conditions for representer theorems that hold for
interpolation problems. However, in practical applications one
encounters {\em regularization} problems more frequently than
interpolation problems.

First of all, the family of the former problems is more general than
that of the latter ones. Indeed, an interpolation problem can be
simply obtained in the limit as the {\em regularization parameter}
goes to zero \citep{micchelli-pinkus}. 
More importantly, regularization enables one to
trade off interpolation of the data against smoothness or simplicity
of the model, whereas interpolation frequently suffers from {\em
overfitting}. 

Thus, frequently one considers problems of the form
\beq
\min\left\{ \E \bigl( \left( \lb w,x_1\rb,\dots,\lb w,x_m \rb \right) ,
\left( y_1, \dots, y_m \right) \bigr) +
\gamma \, \Omega(w): w \in \calH \right\} \,,
\label{eq:reg_general}
\eeq 
where $\gamma > 0$ is called the regularization parameter. This parameter is not known in
advance but can be tuned with techniques like {\em cross validation}
\citep{wahba}.  Here, $\Omega: \calH \rightarrow \R$ is a {\em
regularizer}, $\E: \R^m \times \calY^m \rightarrow \R$ is an error
function and $x_i \in \calH, y_i \in \calY,
\forall i \in \N_m,$ are given input and output data.
The set $\calY$ is a subset of $\R$ and varies depending on the context, so that
it is typically assumed equal to $\R$ in the case
of regression or equal to $\{-1,1\}$ in binary classification problems.
One may also consider the associated interpolation problem, which is
\beq 
\min\left\{ \Omega(w): w \in \calH ,
\lb w,x_i\rb = y_i,~\forall i \in \N_m
\right\} \,.
\label{eq:int_general}
\eeq 

Under certain assumptions, the minima in problems
\eqref{eq:reg_general} and \eqref{eq:int_general} are attained
(whenever the constraints in \eqref{eq:int_general} are satisfiable).
Such assumptions could involve, for example, lower
semi-continuity and boundedness of sublevel sets for $\Omega$ and 
boundedness from below for $\E$. 
%-- see, for example, \cite[Sec. ?]{wets}.
These issues will not concern us here, as we shall assume the following 
about the error function $\E$ and the regularizer $\Omega$, from now on.
\begin{assumption}
The minimum \eqref{eq:reg_general} is attained for any $\gamma > 0$,
any input and output data $\{x_i, y_i : i \in \N_m \}$ and any $m \in
\N$.  The minimum \eqref{eq:int_general} is attained for any input and
output data $\{x_i, y_i : i \in \N_m \}$ and any $m \in \N$, whenever
the constraints in \eqref{eq:int_general} are satisfiable..
\label{assumption}
\end{assumption} 

%Henceforth, we shall draw conclusions about the form of a minimizer 
%of \eqref{eq:reg_general} or \eqref{eq:int_general} whenever it exists.

The main objective of this paper is to obtain {\em necessary and sufficient}
conditions on $\Omega$ so that the solution of problem
\eqref{eq:reg_general} satisfies a {\em linear representer theorem}.

\begin{definition}
We say that a class of optimization problems such as
\eqref{eq:reg_general} or \eqref{eq:int_general} satisfies the {\em
linear representer theorem} if, for any choice of data $\{ x_i, y_i :
i \in \N_m \}$ such that the problem has a solution, {\em there
exists} a solution that belongs to $\Span \{ x_i : i \in \N_m \}$.
\label{def:rep}
\end{definition}

In this section, we show that the existence of representer theorems
for regularization problems is equivalent to the existence of
representer theorems for interpolation problems, under a quite general
condition that has a rather simple geometric interpretation.

We first recall a lemma from \citep[Sec. 2]{banach} which states that 
(linear or not) representer theorems for interpolation lead to 
representer theorems for regularization, under no conditions on the error function.

\begin{lemma}
Let $\E: \R^m \times \calY^m \rightarrow \R$, $\Omega : \calH \rightarrow \R$ satisfying Assumption
\ref{assumption}.
Then if the class of interpolation problems \eqref{eq:int_general} satisfies the linear representer theorem,
so does the class of regularization problems \eqref{eq:reg_general}.
\label{lem:reg_int}
\end{lemma}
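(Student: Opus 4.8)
The plan is to use a direct reduction from regularization to interpolation, exploiting the fact that the error term in \eqref{eq:reg_general} depends on $w$ only through the inner products $\lb w, x_i\rb$.

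First I would fix an arbitrary instance of the regularization problem \eqref{eq:reg_general}, with data $\{x_i, y_i : i \in \N_m\}$ and parameter $\gamma > 0$. By Assumption \ref{assumption} this problem attains its minimum at some $\hat{w} \in \calH$. The idea is to associate to $\hat{w}$ the interpolation problem \eqref{eq:int_general} whose constraint values are exactly the inner products realized by $\hat{w}$, namely $a_i := \lb \hat{w}, x_i\rb$ for $i \in \N_m$.

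Next I would observe that this interpolation problem is feasible, since $\hat{w}$ itself satisfies its constraints, so by Assumption \ref{assumption} it attains its minimum; and by the hypothesis of the lemma, it admits a minimizer $\hat{u}$ lying in $\Span\{x_i : i \in \N_m\}$. I would then compare $\hat{u}$ against $\hat{w}$ in the regularization objective. By construction $\lb \hat{u}, x_i\rb = a_i = \lb \hat{w}, x_i\rb$ for every $i$, so the two vectors produce identical error terms in $\E$. Moreover, since $\hat{u}$ minimizes $\Omega$ over the feasible set of the interpolation problem and $\hat{w}$ belongs to that set, we have $\Omega(\hat{u}) \le \Omega(\hat{w})$. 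Combining these two facts shows that the regularization objective at $\hat{u}$ is no larger than at $\hat{w}$; since $\hat{w}$ is optimal, $\hat{u}$ must be optimal as well. As $\hat{u} \in \Span\{x_i : i \in \N_m\}$, the class of regularization problems satisfies the linear representer theorem.

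I do not expect a serious obstacle here: the argument is a clean one-directional reduction, and the only delicate points are bookkeeping ones. The role of Assumption \ref{assumption} is simply to guarantee that both the regularization minimizer $\hat{w}$ and the interpolation minimizer $\hat{u}$ genuinely exist, so that the comparison is between actual optima rather than infima. It is worth emphasizing that the argument never uses linearity of the representer property specifically, nor any structural property of $\E$ beyond its dependence on $w$ through the inner products alone; this is precisely why the cited lemma from \citep{banach} holds for nonlinear representer theorems as well, and why the converse direction (addressed separately) is the one requiring extra qualifications on $\E$.
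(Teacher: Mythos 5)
Your proposal is correct and follows essentially the same route as the paper's own proof: construct the auxiliary interpolation problem whose constraint values are $\lb \hat{w}, x_i\rb$, take the spanning solution guaranteed by the hypothesis, and note it matches $\hat{w}$ in the error term while not increasing $\Omega$. Your additional remarks (feasibility via $\hat{w}$, the role of Assumption \ref{assumption}, and the observation that linearity of the representer form is never used) are accurate glosses on the same argument.
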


\begin{proof}
Consider a problem of the form \eqref{eq:reg_general} and let $\wh$ be a solution.
We construct an associated interpolation problem
\beq
\min\left\{ \Omega(w): w \in \calH ,
\lb w,x_1\rb = \lb \wh, x_1 \rb,\dots,\lb w,x_m \rb = \lb \wh, x_m \rb
\right\} \,.
\label{eq:aux_int}
\eeq
By hypothesis, there exists a solution $\tw$ of \eqref{eq:aux_int} that lies in
$\Span \{x_i : i \in \N_m\}$. But then $\Omega(\tw) \leq \Omega(\wh)$ and 
hence $\tw$ is a solution of \eqref{eq:reg_general} and the result follows.
\end{proof}

%%%%%%%%%%%%%%%%%%%%%%%%%%%%%%%%%%%%%%%%%%%%%%%%%%

This lemma requires no special properties of the functions
involved. Its converse, in contrast, requires assumptions about the
analytical properties of the error function. We provide one such
natural condition in the theorem below, but other conditions could
conceivably work too. The main idea in the proof is, based on a single
input, to construct a sequence of appropriate regularization problems
for different values of the regularization parameter $\gamma$. Then,
it suffices to show that letting $\gamma
\to 0^+$ yields a limit of the minimizers that satisfies an interpolation constraint.

\begin{theorem}
Let $\E: \R^m \times \calY^m \rightarrow \R$ 
and $\Omega : {\cal H} \to \R$.
Assume that $\E, \Omega$ are lower semi-continuous,
that $\Omega$ has bounded sublevel sets and that
$\E$ is bounded from below.
Assume also that, for some $v \in \R^m \setminus \{0\}, y \in \calY^m$,
there exists a {\em unique} minimizer of
$\min\{\E(av,y) : a \in \R \}$ and that this minimizer does not equal zero.
Then if the class of regularization problems \eqref{eq:reg_general}
satisfies the linear representer theorem, so does 
the class of interpolation problems \eqref{eq:int_general}.
\label{thm:reg_int}
\end{theorem}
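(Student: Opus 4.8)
The plan is to prove the implication directly: take an arbitrary feasible instance of the interpolation problem \eqref{eq:int_general}, with data $\{x_i,y_i:i\in\N_m\}$, and exhibit a minimizer lying in $V:=\Span\{x_i:i\in\N_m\}$. Since the instance is feasible, there is a unique interpolant $w_p\in V$ (the orthogonal projection onto $V$ of any feasible point); it satisfies $\lb w_p,x_i\rb=y_i$ for all $i$ and lies in $V$ by construction. Thus it suffices to show that $w_p$ is $\Omega$-optimal, i.e. that $\Omega(w_p)\le\Omega(w)$ for every $w$ in the feasible set $F:=\{w\in\calH:\lb w,x_i\rb=y_i,\ i\in\N_m\}$. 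The regularization representer theorem will supply exactly this optimality, through a rank-one construction that funnels the multivariate error through the single calibrated direction $v$.

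First I would exploit the special datum $(v,y)$. Assuming $w_p\neq0$, set $z:=\dfrac{a^*}{\|w_p\|^2}\,w_p$, where $a^*\neq0$ denotes the unique minimizer of $a\mapsto\E(av,y)$, and consider the family of regularization problems \eqref{eq:reg_general} with inputs $z_i:=v_i z$, outputs $y$, and parameter $\gam>0$. Because every input is a multiple of the single vector $z$, one has $(\lb w,z_1\rb,\dots,\lb w,z_m\rb)=\lb w,z\rb\,v$, so the error term collapses to $\E(\lb w,z\rb\,v,y)$, a function of the scalar $\lb w,z\rb$ alone whose unique minimizer is $a^*$. By hypothesis each of these problems satisfies the linear representer theorem, so it has a minimizer $w_\gam\in\Span\{z_i:i\in\N_m\}=\Span\{z\}\subseteq V$.

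Next I would let $\gam\to0^+$. Comparing the objective value at $w_\gam$ with its value at $w_p$ shows $\Omega(w_\gam)\le\Omega(w_p)$, so the bounded-sublevel-set hypothesis on $\Omega$ yields a subsequence $w_{\gam_k}\to w_0$, with $w_0\in\Span\{z\}\subseteq V$ since this line is closed. A standard Tikhonov limiting argument---using lower semicontinuity of $\E$ and $\Omega$, boundedness of $\E$ from below, and crucially the \emph{uniqueness} of $a^*$---then forces $\lb w_0,z\rb=a^*$ and identifies $w_0$ as a minimizer of $\Omega$ over the affine hyperplane $H:=\{w\in\calH:\lb w,z\rb=a^*\}$. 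A direct computation gives $w_0=w_p$, and a second computation shows $F\subseteq H$: writing $w_p=\sum_j\beta_j x_j$, for any interpolant $w$ one has $\lb w,z\rb=\dfrac{a^*}{\|w_p\|^2}\sum_j\beta_j y_j=a^*$, since $\sum_j\beta_j y_j=\lb w_p,w_p\rb$. As $w_0=w_p\in F\subseteq H$ and $w_0$ minimizes $\Omega$ on the larger set $H$, it minimizes $\Omega$ on $F$; hence $w_p$ is the desired solution in $V$.

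I expect the main obstacle to be the $\gam\to0^+$ limit, which must be carried out with only lower semicontinuity available: without continuity one cannot simply pass to the limit in the error term, and the argument leans essentially on the uniqueness of $a^*$ to guarantee that the limit $w_0$ lands exactly on $H$ rather than merely near it. A secondary technical point is the degenerate case $w_p=0$ (all outputs $y_i$ equal to zero), for which the direction $z$ above is undefined; I would treat it separately by rescaling $z$ so that the hyperplane level tends to $0$, thereby concluding that $0$ minimizes $\Omega$ over $V^\perp$, which is precisely the required optimality in that case.
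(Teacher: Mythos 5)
Your main argument is sound and is, at its core, the paper's own proof in different packaging. The paper fixes an arbitrary $x \neq 0$, feeds the rank-one data $\frac{a_0 v_i}{\|x\|^2}\,x$ into the regularization problem \eqref{eq:reg_general}, lets $\gamma \to 0^+$ along a convergent subsequence (possible because the minimizers live on the one-dimensional span of $x$ and $\Omega$ has bounded sublevel sets), and uses lower semicontinuity together with the uniqueness of $a_0$ to conclude that $x$ minimizes $\Omega$ over the hyperplane $\{w : \lb w,x\rb = \|x\|^2\}$; this is exactly property \eqref{eq:geom}, and Lemma \ref{lem:1case} then converts it into the interpolation representer theorem. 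You run the identical limiting argument but instantiate it only at $x = w_p$, the minimal-norm interpolant of the given instance, and you replace the appeal to Lemma \ref{lem:1case} by the direct observation that the feasible set $F$ is contained in the hyperplane $H$ (every interpolant has the same inner product with $w_p$, since $w_p$ lies in the span of the $x_i$). That inclusion is precisely the content of the easy direction of Lemma \ref{lem:1case}, decomposing a feasible $w$ as $w_p$ plus an orthogonal part, so the two routes are mathematically the same; your instance-wise version is correct and self-contained for the case $w_p \neq 0$.

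The genuine defect is your treatment of the degenerate case $w_p = 0$ (all $y_i = 0$, so the feasible set is $V^\perp$; note also the sub-case in which all $x_i = 0$ and the feasible set is all of $\calH$, where no nonzero direction related to the data exists at all). Your sketch --- rescale $z$ so that the hyperplane level tends to $0$ --- cannot be closed under the theorem's hypotheses. To compare $\Omega(0)$ with $\Omega(w)$ for a fixed $w \in V^\perp$ you must produce points of the shifted hyperplanes at which $\Omega$ is eventually no larger than $\Omega(w)$: if the hyperplane direction is $w$ itself, the hyperplanes $\{w' : \lb w',w\rb = \epsilon\}$ stay at distance about $\|w\|$ from $w$ as $\epsilon \to 0$, so no comparison point approaches $w$; if instead the direction $u$ is orthogonal to $w$, the natural comparison points $w + \epsilon u$ do converge to $w$, but then you need $\limsup_{\epsilon \to 0}\Omega(w+\epsilon u) \leq \Omega(w)$, i.e.\ upper semicontinuity along a segment, whereas the theorem assumes only \emph{lower} semicontinuity of $\Omega$, which yields the inequality in the wrong direction. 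The repair is immediate and is what the paper does: apply the representer hypothesis to the regularization instance whose inputs are all zero. The error term is then constant in $w$, and the existence of a solution in $\Span\{0\} = \{0\}$ says exactly that $\Omega(0) \leq \Omega(w)$ for all $w \in \calH$, i.e.\ $0$ is a global minimizer of $\Omega$, which settles the degenerate case at once.
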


\begin{proof}
Fix an arbitrary $x\neq 0$ and let $a_0$ be the minimizer of
$\min\{\E(av,y) : a \in \R \}$.
Consider the problems
$$
\min\left\{ \E \left(\dfrac{a_0}{\|x\|^2}\lb w, x\rb\, v, y \right) + 
\gamma\, \Omega(w) : w \in \calH \right \} ,
$$
for every $\gamma > 0$, 
and let $w_\gamma$ be a solution in the span of $x$ (known to exist by hypothesis).
We then obtain that
\begin{align}
\E(a_0v,y) + \gamma \Omega(w_\gamma) & \leq 
\E \left( \dfrac{a_0}{\|x\|^2} \lb w_\gamma, x\rb \, v, y \right) + 
\gamma\, \Omega(w_\gamma) \leq
\E \left(a_0\, v, y \right) + \gamma\, \Omega\left(x \right) \,.
%\tag{$R_\gamma$}
\label{eq:reg_gamma}
\end{align}
Thus, $\Omega(w_\gamma) \leq \Omega\left(x \right)$
and so, by the hypothesis on $\Omega$, the set $\{w_\gamma: \gamma > 0\}$ is bounded. Therefore, there exists a
convergent subsequence $\{w_{\gamma_\ell} : \ell \in \N\}$, with $\gamma_\ell \to 0^+$,
whose limit we call $\wb$.
By taking the limits as $\ell \to \infty$ on the inequality on the right
in \eqref{eq:reg_gamma}, we obtain
\begin{align}
\E \left( \dfrac{a_0}{\|x\|^2}\lb \wb, x\rb\,v, y \right)
\leq \E \left(a_0\,v, y \right) 
\end{align}
and consequently
\begin{align}
\dfrac{a_0}{\|x\|^2}\lb \wb, x\rb\ = a_0 
\end{align}
or 
$$
\lb \wb, x\rb\ = \|x\|^2.
$$

In addition, since $w_\gamma$ belongs to the span of $x$ for every $\gamma > 0$, 
so does $\wb$.  
Thus, we obtain that $\wb = x$.
Moreover, from the definition of $w_\gamma$ we have that
\begin{flalign}
&& \E \left( \dfrac{a_0}{\|x\|^2} \lb w_\gamma, x\rb \, v, y \right) + 
\gamma\, \Omega(w_\gamma) \leq
\E \left( a_0 \, v, y \right) + 
\gamma\, \Omega(w) 
&&  \forall w\in\calH ~\text{such that}~ \lb w,x\rb=\|x\|^2
\end{flalign}
and, combining with the definition of $a_0$, that
\begin{flalign}
&& \Omega(w_\gamma) \leq
\Omega(w) .
&&  \forall w\in\calH ~\text{such that}~ \lb w,x\rb=\|x\|^2
\end{flalign}
Taking the limits as $\ell \to \infty$, we conclude that
$\wb=x$ is a solution of the problem
$$
\min\{\Omega(w) : w \in \calH, \lb w,x\rb = \|x\|^2\} \,.
$$
Moreover, this assertion holds even when $x=0$, since the hypothesis
implies that $0$ is a global minimizer of $\Omega$.
Indeed, any regularization problem of the type \eqref{eq:reg_general} with zero inputs,
$x_i = 0, \forall i\in\N_m$, admits a solution in their span.
Thus, we have shown that $\Omega$ satisfies property \eqref{eq:geom}
and the result follows immediately from Lemma \ref{lem:1case}.
\end{proof}

We now comment on some commonly used error functions.
The first is the {\em square loss}, 
$$
\E(z,y) = \sum_{i\in\N_m} (z_i-y_i)^2 \,,
$$
for $z,y \in \R^m$. 
It is immediately apparent that Theorem \ref{thm:reg_int} applies in this case.

The second case is the
{\em hinge loss}, 
$$
\E(z,y) = \sum_{i\in\N_m} \max(1-z_i y_i,0) \,,
$$ 
where the outputs $y_i$ are assumed to belong to $\{-1,1\}$ for the purpose of
classification. 
In this case, we may select
$y_i = 1,\forall i\in\N_m,$ and $v = (-1,-2,0,\dots,0)\trans$ for $m\geq 2$.
Then the function $\E(\cdot\,v,y)$ is the one shown in
Figure \ref{fig:hinge}.
\begin{figure}
\begin{center}
\includegraphics[width=0.5\textwidth]{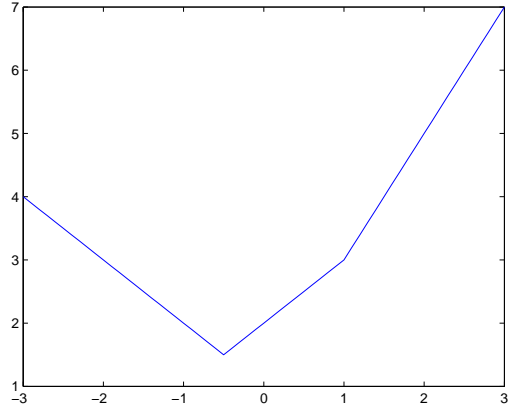}
\caption{Hinge loss along the direction $(1,-2,0,\dots,0)$.}
\label{fig:hinge}
\end{center}
\end{figure}

Finally, the {\em logistic loss}, 
$$
\E(z,y) = 
\sum_{i\in\N_m} \log \left( 1+ e^{-z_i y_i} \right) \,,
$$ is also used in classification problems.  In this case, we may
select $y_i = 1,\forall i\in\N_m,$ and $v = (2,-1)\trans$ for $m=2$ or
$v = (m-2,-1,\dots,-1)\trans$ for $m>2$. In the latter case, for
example, setting to zero the derivative of $\E(\cdot\, v,y)$ yields
the equation $(m-1) e^{a(m-1)}+e^a-m+2 =0$, which can easily be seen
to have a unique solution.

Summarizing, we obtain the following corollary.
\begin{corollary}
If $\E: \R^m \times \calY^m \rightarrow \R$ is the square loss,
the hinge loss (for $m\geq2$) or the logistic loss (for $m\geq2$)
and $\Omega : \calH \rightarrow \R$ is lower semi-continuous with bounded sublevel sets, 
then the class of problems
\eqref{eq:reg_general} satisfies the linear representer theorem if and only if
the class of problems \eqref{eq:int_general} does.
\end{corollary}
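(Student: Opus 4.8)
The plan is to read the corollary off the two implications already established, Lemma~\ref{lem:reg_int} and Theorem~\ref{thm:reg_int}, reducing everything to a check that each of the three named losses meets the hypotheses of the latter. I would first note that the stated regularity of $\Omega$ (lower semi-continuity with bounded sublevel sets) together with the fact that each of the three losses is continuous and nonnegative guarantees Assumption~\ref{assumption}, so that the minima in \eqref{eq:reg_general} and \eqref{eq:int_general} are attained whenever feasible. Granting this, the implication ``interpolation representer theorem $\Rightarrow$ regularization representer theorem'' is exactly Lemma~\ref{lem:reg_int}, which needs no further structure on $\E$; this settles one direction uniformly for all three losses.

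For the converse I would apply Theorem~\ref{thm:reg_int}. Its analytic hypotheses hold directly: $\Omega$ is lower semi-continuous with bounded sublevel sets by assumption, while the square, hinge and logistic losses are each continuous (hence lower semi-continuous) and bounded below by $0$. The only substantive requirement is to exhibit $v\in\R^m\setminus\{0\}$ and $y\in\calY^m$ for which the one-dimensional problem $\min\{\E(av,y):a\in\R\}$ has a \emph{unique} minimizer that is moreover nonzero. I would verify this loss by loss.

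For the square loss the function $a\mapsto\sum_{i\in\N_m}(av_i-y_i)^2$ is a strictly convex quadratic whenever $v\neq0$, so its minimizer is unique; its derivative at $a=0$ equals $-2\lb v,y\rb$, so choosing any $y$ with $\lb v,y\rb\neq0$ forces the minimizer to be nonzero. For the logistic loss I would take $y=(1,\dots,1)\trans$ and $v=(2,-1)\trans$ when $m=2$, respectively $v=(m-2,-1,\dots,-1)\trans$ when $m>2$; here $a\mapsto\E(av,y)$ is a sum of strictly convex functions, hence strictly convex with a unique minimizer, and since its derivative at $a=0$ is nonzero (equal to $-\tfrac12$ when $m=2$ and to $+\tfrac12$ when $m>2$) the minimizer again cannot be $0$. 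In both of these cases uniqueness comes for free from strict convexity, and nonvanishing of the minimizer reduces to the easy observation that $0$ is not a stationary point.

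The step I expect to be the real obstacle is the hinge loss, precisely because $a\mapsto\E(av,y)$ is only piecewise linear and convex rather than strictly convex, so a careless direction would create a flat segment and destroy uniqueness. The fix is a deliberate choice of a direction with two nonzero coordinates of opposite sign and unequal magnitude, namely $y=(1,\dots,1)\trans$ and $v=(1,-2,0,\dots,0)\trans$ for $m\geq2$ (the direction depicted in Figure~\ref{fig:hinge}, and the reason the corollary excludes $m=1$). I would then compute directly that $\E(av,y)=\max(1-a,0)+\max(1+2a,0)+(m-2)$ is strictly decreasing for $a<-\tfrac12$ and strictly increasing for $a>-\tfrac12$, with no flat piece, so its minimizer is the single vertex $a=-\tfrac12\neq0$. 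With these three verifications in hand, Theorem~\ref{thm:reg_int} yields the implication ``regularization representer theorem $\Rightarrow$ interpolation representer theorem'', and combined with the first direction this gives the claimed equivalence.
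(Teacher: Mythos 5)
Your proposal is correct and follows essentially the same route as the paper: one direction is Lemma~\ref{lem:reg_int}, the other is Theorem~\ref{thm:reg_int}, and the substance is the loss-by-loss verification of a unique nonzero minimizer of $a\mapsto\E(av,y)$, with the same choices of $v$ and $y$ for the logistic loss (your strict-convexity-plus-nonzero-derivative argument replaces the paper's explicit stationarity equation $(m-1)e^{a(m-1)}+e^a-m+2=0$, to the same effect). The one place you deviate from the paper's text is a place where you are right and the text is not: the paper picks $v=(-1,-2,0,\dots,0)\trans$ for the hinge loss, for which $\E(av,y)=\max(1+a,0)+\max(1+2a,0)+(m-2)$ is constant on $(-\infty,-1]$ so the minimizer is not unique, whereas your $v=(1,-2,0,\dots,0)\trans$, which matches the caption of Figure~\ref{fig:hinge}, gives the required unique vertex at $a=-\tfrac{1}{2}\neq 0$.
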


Note also that the condition on $\E$ in Theorem \ref{thm:reg_int} is rather weak
in that an error function $\E$ may satisfy it without being convex.
At the same time, an error function that is ``too flat'', such as
a constant loss, will not do.

We conclude with a remark about the situation in which
the inputs $x_i$ are {\em linearly independent}.\footnote{This occurs frequently
in practice, especially when the dimensionality $d$ is high.}
It has a brief and straightforward proof, which we do not present here.

\begin{remark}
Let $\E$ be the hinge loss or the logistic loss and $\Omega : {\cal H} \to
\R$ be of the form $\Omega(w) = h(\|w\|)$, where $h:\R_+\to \R$ is a
lower semi-continuous function with bounded sublevel sets. Then the
class of regularization problems \eqref{eq:reg_general} in which the
inputs $x_i,i\in\N_m,$ are linearly independent, satisfies the
linear representer theorem.
\end{remark}

%%%%%%%%%%%%%%%%%%%%%%%%%%%%%%%%%%%%%%%%%%%%%%%%%%%%%%%%%%%%%%%%%%%%%%%%%
% Vector Case
%%%%%%%%%%%%%%%%%%%%%%%%%%%%%%%%%%%%%%%%%%%%%%%%%%%%%%%%%%%%%%%%%%%%%%%%%

\section{Representer Theorems for Interpolation Problems}
\label{sec:vector}

The results of the previous section allow us
to focus on linear representer theorems for interpolation problems of the type
\eqref{eq:int_general}. We are going to consider the case of a Hilbert space
$\calH$ as the domain of an interpolation problem. Interpolation
constraints will be formed as inner products of the variable with the
input data. For all purposes in this context, it makes no difference
to think of $\calH$ as being equal to $\R^d$.

In this section, we consider the interpolation problem
\begin{equation}
\min\{\Omega(w): w \in \calH, \lb w,x_i \rb = y_i, i \in \N_m\},
\label{eq:min-int}
\end{equation}
We coin the term {\em admissible} to denote the class of regularizers
we are interested in.
\begin{definition}
We say that the function $\Omega: {\cal H} \rightarrow \R$ is {\em
admissible} if, for every $m \in \N$ and any data set $\{(x_i,y_i): i \in
\N_m\} \subseteq {\cal H} \times \calY$ such that the interpolation
constraints are satisfiable, problem \eqref{eq:min-int} admits a solution
${\hat w}$ of the form
\begin{equation}
{\hat w} = \sum_{i\in\N_m} c_i x_i,
\label{eq:rep}
\end{equation}
where $c_i$ are some real parameters.
\label{def:admissible}
\end{definition}
We say that $\Omega: {\cal H} \rightarrow \R$ is differentiable if,
for every $w \in {\cal H}$, there is a unique vector denoted by
$\nabla \Omega(w)$, such that for all $p \in {\cal H}$, $$
\lim_{t \rightarrow 0} \frac{\Omega(w+ t p)-\Omega(w)}{t} = 
\lb \nabla \Omega(w),p\rb.
$$ This notion corresponds to the usual notion of directional
derivative on $\R^d$ and in that case $\nabla
\Omega(w)$ is the gradient of $\Omega$ at $w$.

In the remainder of the section, we always assume that Assumption \ref{assumption}
holds for $\Omega$.
The following theorem provides a necessary and sufficient condition
for a regularizer to be admissible. 
\begin{theorem}
\label{thm:main1}
Let $\Omega: \calH \rightarrow \R$ be a differentiable function and
${\rm dim} (\calH) \geq 2$. Then $\Omega$ is admissible if and only if
\begin{align}
&&&& \Omega(w) = h(\lb w, w\rb) &&&& \forall~w \in \calH,
\label{eq:functional_vector}
\end{align}
for some nondecreasing function $h : \R_+ \rightarrow \R$.
\end{theorem}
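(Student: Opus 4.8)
The statement is an ``if and only if'' whose two directions are of very different character, so the plan is to dispatch the functional form $\Rightarrow$ admissibility by the classical projection argument and to concentrate effort on the converse, which is the genuinely new content. For the easy direction, suppose $\Omega(w) = h(\lb w,w\rb)$ with $h$ nondecreasing. Given satisfiable constraints $\lb w,x_i\rb = y_i$, I would take any minimizer $\wh$ and split it orthogonally as $\wh = \wh_0 + \wh_1$, where $\wh_0 \in \Span\{x_i : i\in\N_m\}$ and $\wh_1$ is orthogonal to that span. Then $\lb \wh_0, x_i\rb = \lb \wh, x_i\rb = y_i$, so $\wh_0$ is still feasible, while the Pythagorean identity gives $\lb \wh_0,\wh_0\rb \leq \lb \wh,\wh\rb$, whence $\Omega(\wh_0)\leq\Omega(\wh)$ by monotonicity of $h$. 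Thus $\wh_0$ is a solution lying in the span, establishing admissibility.

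For necessity, the first step is to extract a purely geometric consequence of admissibility by specializing to a single datum ($m=1$). For $x\neq 0$ and $y\in\R$, the only vector of $\Span\{x\}$ meeting $\lb w,x\rb=y$ is $\frac{y}{\lb x,x\rb}x$, so admissibility forces this point to minimize $\Omega$ over the entire hyperplane $\{w : \lb w,x\rb = y\}$. Choosing $y = \lb x,x\rb$ and letting $x$ range over $\calH\setminus\{0\}$, this says precisely that every $x$ minimizes $\Omega$ over its own tangent hyperplane $\{w : \lb w-x, x\rb = 0\}$, which is the geometric property \eqref{eq:geom}. Next I would invoke differentiability: the first-order necessary condition for $x$ to minimize $\Omega$ subject to the linear constraint $\lb w,x\rb = \lb x,x\rb$ reads $\lb\nabla\Omega(x),p\rb = 0$ for every $p$ orthogonal to $x$, which is equivalent to saying that $\nabla\Omega(x)$ is parallel to $x$ for all $x\neq 0$.

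It then remains to integrate this radial-gradient condition and to pin down monotonicity. Because $\dim(\calH)\geq 2$, each sphere $\{w : \lb w,w\rb = r^2\}$ is connected, and the vanishing of the directional derivative of $\Omega$ along every tangent direction forces $\Omega$ to be constant on spheres; hence $\Omega(w) = h(\lb w,w\rb)$ for a well-defined $h$. This is exactly where the hypothesis $\dim(\calH)\geq 2$ enters, since in one dimension a sphere is two disconnected points and no such conclusion follows. Finally, to see $h$ is nondecreasing, I would fix radii $r_1 < r_2$ and a unit vector $u$, use $\dim(\calH)\geq 2$ to pick a unit $v$ orthogonal to $u$, and set $w' = r_1 u + \sqrt{r_2^2 - r_1^2}\,v$, which lies in the hyperplane $\{w : \lb w,u\rb = r_1\}$ and satisfies $\lb w',w'\rb = r_2^2$; since $r_1 u$ minimizes $\Omega$ on that hyperplane by \eqref{eq:geom} applied to $x = r_1 u$, we get $h(r_1^2) = \Omega(r_1 u)\leq\Omega(w') = h(r_2^2)$. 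The main obstacle is the necessity direction, and its crucial conceptual step is the realization that admissibility for a single datum already encodes tangent-hyperplane minimization; once differentiability converts this into $\nabla\Omega(w)\parallel w$, the integration to radial symmetry and the monotonicity argument are routine and both hinge on $\dim(\calH)\geq 2$.
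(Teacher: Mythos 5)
Your proposal is correct and follows essentially the same route as the paper: you extract the geometric property \eqref{eq:geom} from the single-datum interpolation problem (the content of Lemma \ref{lem:1case}), use differentiability to get $\lb \nabla \Omega(w), p \rb = 0$ whenever $\lb w,p \rb = 0$, integrate this to constancy of $\Omega$ on spheres, define $h$ radially, and prove monotonicity by the same orthogonal-perturbation construction the paper relies on. The one point where you are less explicit than the paper is the constancy-on-spheres step: the paper builds concrete differentiable paths (a rotation path $z(\lambda) = \|w\| e^{\lambda D} w_0$ via Lemma \ref{lem:3} in finite dimensions, and a normalized-segment path plus a limiting argument in infinite dimensions), whereas your appeal to connectedness of the sphere must be backed by differentiable paths (e.g., great circles) to apply the chain rule --- a routine instantiation of exactly your idea.
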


It is well known that the above functional form is sufficient for a
representer theorem to hold (see for example \citep{SHS}). Here we show
that it is also necessary.

The route we follow to proving the above theorem is based on a
geometric interpretation of representer theorems. This intuition can
be formally expressed as condition \eqref{eq:geom} in the lemma
below. Both condition \eqref{eq:geom} and functional
form \eqref{eq:functional_vector} express the property that the
contours of $\Omega$ are {\em spheres} (or regions between spheres), 
which is apparent from Figure \ref{fig:contours}.

\begin{figure}
\begin{center}
\includegraphics[width=0.6\textwidth]{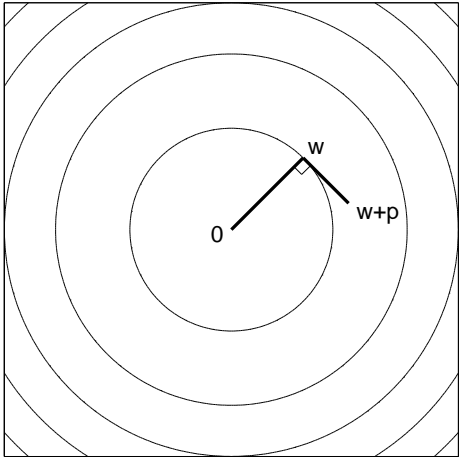}
\end{center}
\caption{Geometric interpretation of Theorem \ref{thm:main1}. The function $\Omega$ should not decrease when moving to
orthogonal directions. The contours of such a function should be spherical.}
\label{fig:contours}
\end{figure}

\begin{lemma}
A function $\Omega: \calH \rightarrow \R$ is admissible if and only if it satisfies the property
that 
\begin{align}
&&&& \Omega(w+p) \geq \Omega(w) && \forall~ w,p \in \calH ~\text{such that}~ \lb w,p\rb= 0.
\label{eq:geom}
\end{align}
\label{lem:1case}
\end{lemma}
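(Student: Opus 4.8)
The plan is to prove the two implications separately, and the two directions are very asymmetric in difficulty. First I would dispose of the \emph{sufficiency} of \eqref{eq:geom}, i.e.\ that \eqref{eq:geom} forces admissibility. Given any interpolation problem \eqref{eq:min-int} with satisfiable constraints, I would set $L = \Span\{x_i : i \in \N_m\}$ and write an arbitrary $w \in \calH$ as $w = u + p$, where $u \in L$ is the orthogonal projection of $w$ onto $L$ and $p \in L^\perp$. Since each $x_i \in L$, the constraints see only $u$: we have $\lb w, x_i \rb = \lb u, x_i \rb$, so $u$ is feasible whenever $w$ is. Because $\lb u, p \rb = 0$, property \eqref{eq:geom} gives $\Omega(w) = \Omega(u + p) \geq \Omega(u)$. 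By Assumption \ref{assumption} a solution $\hat w$ exists; its projection onto $L$ is then feasible, lies in $\Span\{x_i\}$, and has no larger $\Omega$-value, hence is itself a solution of the required form.

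For the \emph{necessity} direction (admissibility implies \eqref{eq:geom}), the key idea is to manufacture, for each pair $w, p$ with $\lb w, p \rb = 0$, a single-constraint interpolation problem whose only span-representable solution is $w$ itself, while $w + p$ stays feasible. Assuming first $w \neq 0$, I would take $m = 1$, $x_1 = w$ and $y_1 = \lb w, w \rb$ in \eqref{eq:min-int}. Its feasible set $\{v : \lb v, w \rb = \|w\|^2\}$ contains both $w$ and, using $\lb w, p \rb = 0$, the point $w + p$. By admissibility there is a minimizer in $\Span\{w\}$; but the unique feasible vector of the form $c\,w$ is $w$ itself, since $c\|w\|^2 = \|w\|^2$ forces $c = 1$. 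Hence $w$ is a minimizer, so $\Omega(w) \leq \Omega(v)$ for every feasible $v$, and in particular $\Omega(w) \leq \Omega(w + p)$, which is exactly \eqref{eq:geom}.

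The remaining issue is the degenerate case $w = 0$, where \eqref{eq:geom} asserts that $0$ is a global minimizer of $\Omega$. Here I would take the single data point $x_1 = 0$, $y_1 = 0$: the constraint $\lb v, 0 \rb = 0$ holds for every $v \in \calH$, and admissibility forces a solution in $\Span\{0\} = \{0\}$, so $\Omega(0) \leq \Omega(p)$ for all $p$. I expect the only real subtlety to lie in the necessity direction, specifically in verifying that the chosen one-point problem pins the span-representable solution down to exactly $w$ and that $w+p$ is simultaneously feasible; once the right interpolation data are selected the argument is short, so the difficulty is one of construction rather than of estimation.
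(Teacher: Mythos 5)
Your proof is correct and takes essentially the same route as the paper's: sufficiency by orthogonally projecting a solution onto $\Span\{x_i : i \in \N_m\}$ and invoking \eqref{eq:geom}, and necessity via the single-constraint problem $\min\{\Omega(z) : z \in \calH,\ \lb z,w\rb = \lb w,w\rb\}$, whose span-representable solution must be $w$ itself. Your explicit verification that $c=1$ is forced and your separate treatment of the degenerate case $w=0$ merely spell out details the paper leaves implicit (the paper's argument covers $w=0$ since $\Span\{0\}=\{0\}$), so there is no substantive difference.
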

\begin{proof}
Suppose that $\Omega$ satisfies property (\ref{eq:geom}),
consider arbitrary data $x_i,y_i, i \in \N_m,$
and let $\wh$ be a solution to problem \eqref{eq:min-int}. We can
uniquely decompose $\wh$ as $\wh = \wb + p$ where
$\wb \in \calL := \Span\{x_i : i \in \N_m\}$ and $p \in \calL^\perp$.
From \eqref{eq:geom} we obtain that $\Omega(\wh) \geq \Omega(\wb)$. 
Also $\wb$ satisfies the interpolation constraints
and hence we conclude that $\wb$ is a solution to problem \eqref{eq:min-int}.

Conversely, if $\Omega$ is admissible choose any $w \in {\cal H}$ and
consider the problem $\min \{ \Omega(z) : z \in \calH, \lb z,w \rb = \lb w,w\rb\}$. By
hypothesis, there exists a solution belonging in $\Span\{w\}$ and hence 
$w$ is a solution to this problem. Thus, we have that 
$\Omega(w+p) \geq \Omega(w)$ for every $p$ such that $\lb w, p \rb=0$.
\end{proof}

%\begin{proof}
%First, suppose that $\Omega$ is admissible.
%Let ${\bar w}$ be a solution to problem \eqref{eq:min-int}. We can
%uniquely decompose ${\bar w}$ as ${\bar w} = {\hat w} + z$ where
%${\hat w} \in {\rm span}\{x_i, i \in \N_m\}$ and $z \perp x_i$, for
%all $i \in \N_m$.  By hypothesis $\Omega({\bar w}) \geq \Omega({\hat
%w})$. Hence, since ${\hat w}$ interpolates the data, it is also a
%solution.
%Conversely, choose $m=1$, any $x_1 \in \R^d \setminus \{0\}$ and consider the problem
%\, $\min\{\Omega(w): \lb w,x_1\rb=\lb x_1,x_1\rb\}$. Since 
%$x_1$ is the {\em only} solution of the form
%(\ref{eq:rep}), property (\ref{eq:geom}) holds with $w =
%x_1$. 
%\end{proof}

%XXX [version of the proposition for local minimizers]
It remains to establish the equivalence of the geometric property \eqref{eq:geom} 
to condition \eqref{eq:functional_vector} that $\Omega$ is a nondecreasing function of
the $L_2$ norm.

\begin{proof}[Proof of Theorem \ref{thm:main1}]
Assume first that \eqref{eq:geom} holds and ${\rm dim} (\calH) <
\infty$. In this case, we only need to consider the case that $\calH =
\R^d$ since \eqref{eq:geom} can always be rewritten as an equivalent
condition on $\R^d$, using an orthonormal basis of $\calH$.

First we observe that, since $\Omega$ is differentiable, this property
implies the condition that 
\beq
\lb \nabla \Omega(w),p \rb = 0 \,,
\label{eq:grad_zero}
\eeq
for all $w,p \in \R^d$ such that $\lb w,p\rb = 0$. 

Now, fix any $w_0 \in \R^d$ such that $\|w_0\| = 1$.  Consider an
arbitrary $w \in \R^d$. Then there exists an orthogonal matrix $U \in
\OO^d$ such that $w = \|w\| U w_0$ and $\det(U)=1$ (see Lemma
\ref{lem:3} in the appendix). Moreover, we can write $U=e^D$ for some skew-symmetric
matrix $D\in\Mb_{d,d}$ --- see \citep[Example 6.2.15]{horn2}.  Consider
now the path $z:[0,1] \to \R^d$ with
\begin{align}
&&&& z(\lambda) = \|w\| e^{\lambda D} w_0 && \forall~\lambda \in [0,1].
\end{align}
We have that $z(0) = \|w\| w_0$ and $z(1) = w$.
Moreover, since $ \lb z(\lambda),z(\lambda)\rb = \lb w,w\rb$,
we obtain that
\begin{align}
&&&& \lb z'(\lambda),z(\lambda)\rb = 0 && \forall~\lambda \in [0,1].
\end{align}
Applying \eqref{eq:grad_zero} with $w = z(\lambda),p = z'(\lambda)$, it follows
that
$$ 
\frac{d \Omega(z(\lambda))}{d \lambda} = \lb \nabla
\Omega(z(\lambda)),z'(\lambda) \rb = 0.
$$ 
Consequently, $\Omega(z(\lambda))$ is constant and hence
$\Omega(\|w\| w_0) = \Omega(w)$. Setting $h(\xi) = \Omega(\sqrt{\xi}
w_0),~\forall \xi \in \R_+,$ yields \eqref{eq:functional_vector}. In
addition, $h$ must be nondecreasing in order for $\Omega$ to satisfy
property (\ref{eq:geom}).

For the case ${\rm dim} (\calH) = \infty$ we can argue similarly using
instead the path $$ z(\lambda) = \frac{(1-\lambda) w_0 + \lambda
w}{\|(1-\lambda) w_0 + \lambda w\|} \|w\| $$ which is differentiable
on $[0,1]$ when $w \notin {\rm span}\{w_0\}$. We confirm equation
\eqref{eq:functional_vector} for vectors 
in ${\rm span}\{w_0\}$ by a limiting argument on  vectors not in ${\rm
span}\{w_0\}$ since $\Omega$ is surely continuous.

Conversely, if $\Omega(w) = h(\lb w, w\rb)$ and $h$ is nondecreasing,
property (\ref{eq:geom}) follows immediately.
\end{proof}

%%%%%%%%%%%%%%%%%%%%%%%%%%%%%%%%%%%%%%%%%%%%%%%%%%
% STRICTLY
%%%%%%%%%%%%%%%%%%%%%%%%%%%%%%%%%%%%%%%%%%%%%%%%%%

We note that we could modify Definition \ref{def:admissible} by requiring
that {\em any} solution of problem \eqref{eq:min-int} be in the linear
span of the input data. We call such regularizers {\em strictly admissible}. 
Then with minor modifications to Lemma
\ref{lem:1case} (namely, requiring that equality in
\eqref{eq:geom} holds only if $p =0$) and to the proof of Theorem \ref{thm:main1}
(namely, requiring $h$ to be strictly increasing) we have the
following corollary.

\begin{corollary}
\label{cor:main1}
Let $\Omega: \calH \rightarrow \R$ be a differentiable function. 
Then $\Omega$ is {\em strictly} admissible if and only if $\Omega(w) =
h(\lb w, w\rb), ~\forall w\in\calH$, where $h : \R_+ \rightarrow \R$ is {\em strictly} increasing.
\end{corollary}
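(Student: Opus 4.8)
The plan is to follow the two‑stage route sketched in the remark preceding the corollary: first replace Lemma \ref{lem:1case} by a \emph{strict} version that characterizes strict admissibility through the strengthened geometric property
\[
\Omega(w+p) > \Omega(w) \qquad \forall\, w,p \in \calH \text{ with } \lb w,p\rb = 0,\ p \neq 0 ,
\]
and then rerun the proof of Theorem \ref{thm:main1} to identify this property with the functional form $\Omega(w)=h(\lb w,w\rb)$ for a \emph{strictly} increasing $h$. The whole argument is a matter of inserting strict inequalities at the two places in the earlier proofs where they now matter.

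For the strict lemma I would argue both directions exactly as in Lemma \ref{lem:1case}. Suppose the strict property holds and let $\wh$ be \emph{any} solution of \eqref{eq:min-int}; decomposing $\wh = \wb + p$ with $\wb \in \calL := \Span\{x_i : i \in \N_m\}$ and $p \in \calL^\perp$, note that $\wb$ still meets the interpolation constraints (since $p \perp x_i$). If $p \neq 0$ the strict inequality gives $\Omega(\wh) > \Omega(\wb)$, contradicting minimality of $\wh$; hence $p=0$ and every solution lies in $\calL$, i.e. $\Omega$ is strictly admissible. Conversely, if $\Omega$ is strictly admissible, then for any $w \in \calH$ the problem $\min\{\Omega(z): z \in \calH,\ \lb z,w\rb = \lb w,w\rb\}$ has all of its (existing, by Assumption \ref{assumption}) solutions in $\Span\{w\}$; but $w$ is the \emph{only} point of $\Span\{w\}$ satisfying that constraint, so $w$ is the unique minimizer, which forces $\Omega(w+p) > \Omega(w)$ for every nonzero $p \perp w$.

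For the functional form I would exploit that the strict property trivially implies the non‑strict property \eqref{eq:geom}, so the first half of the proof of Theorem \ref{thm:main1} applies verbatim and already yields $\Omega(w) = h(\lb w,w\rb)$ with $h : \R_+ \to \R$ nondecreasing. It then remains only to upgrade ``nondecreasing'' to ``strictly increasing.'' Given $\xi_1 < \xi_2$ in $\R_+$, and using ${\rm dim}(\calH) \geq 2$, I would pick $w$ with $\lb w,w\rb = \xi_1$ and a nonzero $p \perp w$ with $\lb p,p\rb = \xi_2 - \xi_1$; then $\lb w+p,w+p\rb = \xi_2$ by orthogonality, and the strict property gives $h(\xi_2) = \Omega(w+p) > \Omega(w) = h(\xi_1)$. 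The converse direction is immediate: if $h$ is strictly increasing and $\lb w,p\rb = 0$ with $p \neq 0$, then $\lb w+p,w+p\rb = \lb w,w\rb + \lb p,p\rb > \lb w,w\rb$, whence $\Omega(w+p) > \Omega(w)$.

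None of the steps is a genuine obstacle once Theorem \ref{thm:main1} is available; the only points demanding care are the uniqueness observation in the forward direction of the strict lemma (that $w$ is the sole constraint‑satisfying point of its own span, so a minimizer contained in that span must equal $w$) and the existence of a nonzero orthogonal direction, which is precisely where the hypothesis ${\rm dim}(\calH) \geq 2$ enters. The infinite‑dimensional case requires no separate treatment, since the functional form is inherited directly from Theorem \ref{thm:main1} and the strictness argument uses only a single orthogonal perturbation.
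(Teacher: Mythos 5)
Your proposal is correct and follows exactly the route the paper itself sketches for this corollary: a strict version of Lemma \ref{lem:1case} (equality in \eqref{eq:geom} forcing $p=0$) combined with rerunning the proof of Theorem \ref{thm:main1} to upgrade $h$ from nondecreasing to strictly increasing. Worth noting only that your argument correctly makes explicit where ${\rm dim}(\calH)\geq 2$ is needed (to produce a nonzero orthogonal perturbation realizing $\xi_2-\xi_1$), a hypothesis inherited from Theorem \ref{thm:main1} even though the corollary's statement omits it.
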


Theorem \ref{thm:main1} can be used to verify whether the linear
representer theorem can be obtained when using a regularizer $\Omega$.
For example, the function $\|w\|_p = (\sum_{i\in\N_d}
|w_i|^p)^\frac{1}{p}$ is not admissible for any $p \geq 0, p \neq
2$, because it cannot be expressed as a function of the Hilbert space norm.
Indeed, if we choose any $a\in \R$ and let $w=(a \delta_{i1}: i \in \N_d)$,
the requirement that $\|w\|_p = h(\lb w,w \rb)$ would imply that
$h(a^2) = |a|,\forall a \in\R,$ and hence that $\|w\|_p = \|w\|$.

\section{Matrix Learning Problems}
\label{sec:matrix}

In this section, we investigate how representer theorems and results like Theorem
\ref{thm:main1} can be extended in the context of optimization problems
that involve matrices.

\subsection{Exploiting Matrix Structure}
\label{sec:matrix_intro}

As we have already seen, our discussion in Section \ref{sec:vector} 
applies to any Hilbert space. Thus, we may
consider the finite Hilbert space of $d \times n$ matrices $\Mb_{d,n}$ equipped with the
Frobenius inner product $\lb\cdot,\cdot \rb$.
As in Section \ref{sec:vector}, we could consider interpolation problems of the form
\beq
\label{eq:genMR}
\min \left\{ \Omega(W) : W \in M_{d,n}, \lb W, X_i \rb = y_i, i \in \N_m
\right\}
\eeq
where $X_i \in M_{d,n}$ are prescribed input matrices and
$y_i \in \calY$ scalar outputs, for $i \in \N_m$.
Then Theorem \ref{thm:main1} states that such a problem admits a solution of the
form
\begin{equation}
{\hat W} = \sum_{i\in\N_m} c_i X_i,
\label{eq:rep1}
\end{equation}
where $c_i$ are some real parameters, if and only if $\Omega$ can be written in
the form
\begin{align}
&&&& \Omega(W) = h(\lb W, W \rb) &&\forall~W\in \Mb_{d,n},
\label{eq:functional1}
\end{align}
where $h : \R_+ \rightarrow \R$ is nondecreasing.

However, optimization problems of the form \eqref{eq:genMR} do not
occur frequently in machine learning practice. The constraints
of \eqref{eq:genMR} do not utilize the structure inherent in matrices 
-- that is, it makes no difference whether the variable is regarded as a
matrix or as a vector --
and hence have limited applicability. In contrast, in many recent applications, 
some of which we shall briefly discuss below, it is natural to consider
problems like
\beq
\min \left\{ \Omega(W) : W \in \Mb_{d,n} \, , ~ w_t\trans x_{ti} = y_{ti} 
~~ \forall i \in \N_{m_t}, t \in \N_n \right\} \, .
\label{eq:mtl}
\eeq
Here, $w_t \in \R^d$ denote the columns of matrix $W$, for $t \in \N_n$,
and $x_{ti} \in \R^d, y_{ti} \in \calY$ are prescribed inputs and outputs,
for $i \in \N_{m_t}, t \in \N_n$.
In addition, the desired representation form for solutions of such matrix problems is
different from \eqref{eq:rep1}. In this case, one may encounter representer theorems of the form
\begin{align}
\label{eq:genRT}
&&&&&& {\hat w}_t = \sum_{s \in \N_n} \sum_{i \in \N_{m_s}} c^{(t)}_{si} x_{si} &&&& \forall t \in \N_n,
\end{align}
where $c^{(t)}_{si}$ are scalar coefficients for $s,t \in \N_n, i \in \N_{m_s}$.

To illustrate the above, consider the problem of multi-task learning and
problems closely related to it \citep[etc.]{bach_theo,mtl_feat,ml,spectral,candes,CCG_08,
Izenman,Maurer2,maurer,Srebro,ming}.
In learning multiple tasks jointly, each task may be represented by a
vector of regression parameters that corresponds to the column $w_t$
in our notation. There are $n$ tasks and $m_t$ data examples 
$\{ (x_{ti},y_{ti}) : i \in \N_{m_t} \}$ for the $t$-th task.
The learning algorithm used is
\beq
\min \left\{ \E \bigl( w_t\trans x_{t i}, y_{ti} : i \in \N_{m_t}, t \in \N_n\bigr)
+ \gamma \, \Omega(W): W \in \Mb_{d,n} \right\} \,,
\label{eq:mtl_reg}
\eeq
where $\E : \R^M \times \calY^M \to \R, M = \sum_{t\in\N_n}m_t$.
The error term expresses the objective that the regression vector for each task 
should fit well the data for this particular task.
The choice of the regularizer $\Omega$ is important in that it captures certain
relationships between the tasks.
One common choice is the {\em trace norm}, which is defined to be the
sum of the singular values of a matrix or, equivalently,
$$
\Omega(W) = \|W\|_1 := \trace (W\trans W)^\frac{1}{2} \,.
$$ Regularization with the trace norm learns the tasks as one joint
optimization problem, by favoring matrices with low rank. In other
words, the vectors $w_t$ are related in that they are {\em all} linear
combinations of a {\em small} set of basis vectors. It has been
demonstrated that this approach allows
for accurate estimation of related tasks even when there are only {\em
few} data points available for each task.

Thus, it is natural to consider optimization problems of the form
\eqref{eq:mtl}. In fact, these problems can be seen as instances of
problems of the form \eqref{eq:genMR},
because the quantity $w_t\trans x_{t i}$ can be
written as the inner product between $W$ and a matrix
having all its columns equal to zero except for the $t$-th
column being equal to $x_{t i}$.
It is also easy to see that \eqref{eq:genMR} is a richer class
since the corresponding constraints are less restrictive.

Despite this fact, by focusing on the class \eqref{eq:mtl} we concentrate
on problems of more practical interest and we can obtain representer theorems
for a richer class of regularizers, which includes the trace norm and
other useful functions. In contrast, regularization with the functional form
\eqref{eq:functional1} is not a satisfactory
approach since it ignores matrix structure. In particular,
regularization with the Frobenius norm (and a separable error
function) corresponds to learning each task {\em independently},
ignoring relationships among the tasks.

A representer theorem of the form \eqref{eq:genRT} for regularization
with the trace norm has been shown in \citep{ml}. Related results have
also appeared in \citep{bach_theo,srebro_icml}. We repeat here the
statement and the proof of this theorem, in order to better motivate our
proof technique of Section \ref{sec:matrix_proof}. 

\begin{theorem}
\label{thm:rep_trace_norm}
If $\Omega$ is the trace norm then problem \eqref{eq:mtl} (or problem \eqref{eq:mtl_reg}) 
admits a solution ${\hat W}$ 
of the form \eqref{eq:genRT}, for some
$c^{(t)}_{si} \in \R, ~i \in \N_{m_s}, s,t \in \N_n$. 
\end{theorem}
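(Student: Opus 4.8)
The plan is to take an arbitrary optimal solution of \eqref{eq:mtl} and project each of its columns onto the span of all the input vectors, then argue that this projection preserves feasibility and does not increase the trace norm. First I would set $\calL := \Span\{x_{si} : s \in \N_n, i \in \N_{m_s}\} \subseteq \R^d$ and let $P \in \Sb^d_+$ be the matrix of the orthogonal projection of $\R^d$ onto $\calL$, so that $P = P\trans = P^2$ and $P x_{si} = x_{si}$ for all $s \in \N_n, i \in \N_{m_s}$. Let $W$ be any solution of \eqref{eq:mtl}, with columns $w_1,\dots,w_n$, and define $\tW := PW$, whose $t$-th column is $P w_t$.

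Next I would verify that $\tW$ is feasible and already has the desired form. For every $t \in \N_n$ and $i \in \N_{m_t}$, since $x_{ti} \in \calL$ we have $(Pw_t)\trans x_{ti} = w_t\trans P x_{ti} = w_t\trans x_{ti} = y_{ti}$, so $\tW$ satisfies exactly the same interpolation constraints as $W$. Moreover each column $Pw_t$ lies in $\calL$, hence is a linear combination of the vectors $x_{si}$, which is precisely the representation \eqref{eq:genRT}.

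The key step, and the one I expect to be the main obstacle, is to show $\|\tW\|_1 \leq \|W\|_1$, for which I would exploit the fact that the trace norm depends on $W$ only through the matrix $W\trans W$. Since $I - P \succeq 0$, for any $z \in \R^n$ we have $z\trans W\trans(I-P)Wz = \|(I-P)^{1/2}Wz\|^2 \geq 0$, so $W\trans W - \tW\trans\tW = W\trans(I-P)W \succeq 0$, i.e. $W\trans W \succeq \tW\trans\tW$. Because the square root is operator monotone on $\Sb^d_+$, it follows that $(W\trans W)^{1/2} \succeq (\tW\trans\tW)^{1/2}$, and taking traces gives $\|\tW\|_1 = \trace(\tW\trans\tW)^{1/2} \leq \trace(W\trans W)^{1/2} = \|W\|_1$. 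Combined with the feasibility of $\tW$ and the optimality of $W$, this forces $\|\tW\|_1 = \|W\|_1$, so that $\tW$ is itself an optimal solution, now of the form \eqref{eq:genRT}. The corresponding statement for the regularization problem \eqref{eq:mtl_reg} then follows from Lemma \ref{lem:reg_int}. This argument is exactly the template to be generalized in Section \ref{sec:matrix_proof}: the only properties of the trace norm used are that it is a function of $W\trans W$ and that this function is nondecreasing with respect to the semidefinite order.
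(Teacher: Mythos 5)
Your proposal is correct and follows essentially the same route as the paper: project the columns of an optimal $W$ onto $\calL = \Span\{x_{si}\}$ (the paper writes this columnwise as $\wh_t = \wb_t + p_t$ with $P\trans \Wb = 0$, which is exactly your $\tW = PW$), check feasibility, and bound the trace norm via $\tW\trans\tW \preceq W\trans W$ together with operator monotonicity of the square root, which is precisely the content of the paper's Lemma~\ref{lem:tracenorm}. The only cosmetic differences are that you prove the semidefinite comparison inline rather than citing the appendix lemma, and that the paper handles the regularization variant \eqref{eq:mtl_reg} directly by noting the error term is unchanged, rather than routing through (an adaptation of) Lemma~\ref{lem:reg_int}.
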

\begin{proof}
Let $\Wh$ be a solution of \eqref{eq:mtl} and let 
$\calL := \Span\{x_{si} : s \in \N_n, i \in \N_{m_s}\}$. We can
decompose the columns of $\Wh$ as $\wh_t = \wb_t + p_t, ~\forall t \in \N_n$, 
where $\wb_t \in \calL$ and $p_t \in \calL^\bot$. Hence $\Wh = \Wb + P$, 
where $\Wb$ is the matrix with columns $\wb_t$ and $P$ is the matrix with columns $p_t$. 
Moreover we have that $P\trans \Wb = 0$.  From Lemma \ref{lem:tracenorm} in the
appendix, we obtain that $\|\Wh\|_1 \geq \|\Wb\|_1$.
We also have that $\lb w_t, x_{ti} \rb = \lb \wb_t, x_{ti} \rb$, for every
$i \in \N_{m_t}, t \in \N_n$.
Thus, $\Wb$ preserves the interpolation constraints (or the value of the error term)
while not increasing the value of the regularizer. Hence, it is a solution of the
optimization problem and the assertion follows.
\end{proof}

A simple but important observation about this and related results is
that each task vector $w_t$ is a linear combination of the data for
{\em all} the tasks. This contrasts to the representation form
\eqref{eq:rep1} obtained by using Frobenius inner product constraints.
Interpreting \eqref{eq:rep1} in a multi-task context, by appropriately choosing 
the $X_i$ as described above, would imply that each $w_t$ is a
linear combination of only the data for task $t$.
 
%More important for our discussion here is that, as shown in \citep{ml},
%the solution to problem (\ref{eq:genMR}) is of the form
%\beq
%\label{eq:genRT}
%W =
%\sum_{i \in \N_m} X_i C_i,
%\eeq
%where $C_i$ are $n
%\times n$ parameter matrices. For problem \eqref{eq:mtl}, this means that{\em each} of the

Finally, in some applications the following 
variant, similar to the type \eqref{eq:mtl}, has appeared,
\begin{equation}
\min \left\{ \Omega(W) : W \in \Mb_{d,n} \,, ~ w_t\trans x_i = y_{ti} ~~ \forall i \in \N_m, t \in \N_n
\right\} \,.
\label{eq:matrix-int}
\end{equation}
Problems of this type corresponds to a special case in multi-task
learning applications in which the input points are the same for all
the tasks. For instance, this is the case with collaborative filtering
or applications in marketing where the same products/entities are
rated by all users/consumers (see, for example,
\citep{MSbook,EMP,lenk,Srebro} for various approaches to this problem).

%%%%%%%%%%%%%%%%%%%%%%%%%%%%%%%%%%%%%%%%%%%%%%%%%%
% MAIN
%%%%%%%%%%%%%%%%%%%%%%%%%%%%%%%%%%%%%%%%%%%%%%%%%%

\subsection{Characterization of Matrix Regularizers}
\label{sec:matrix_proof}

Our objective in this section will be to state and prove a general
representer theorem for problems of the form \eqref{eq:mtl} or 
\eqref{eq:matrix-int} using a
functional form analogous to \eqref{eq:functional_vector}. The key
insight used in the proof of \citep{ml} has been that the trace norm
is defined in terms of a matrix function that preserves the partial ordering of
matrices. That is, it satisfies Lemma \ref{lem:tracenorm}, 
which is a matrix analogue of the geometric property 
\eqref{eq:geom}. To prove our main result (Theorem \ref{thm:matrix2}), 
we shall build on this observation in a way similar to the approach
followed in Section \ref{sec:vector}.

Before proceeding to a study of matrix interpolation problems, it should be
remarked that our results will apply equally to matrix regularization
problems. That is, a variant of
Theorem \ref{thm:reg_int} can be shown for matrix regularization and
interpolation problems, following along the lines of the proof of that theorem.  The
hypothesis now becomes that for some $V,Y \in \Mb_{n,n}$, $V$
nonsingular, the minimizer of $\min\{\E(AV,Y) : A \in \Mb_{n,n}\}$ is
unique and nonsingular. As a result, matrix regularization with the square loss, the hinge loss
or the logistic loss does not differ from matrix interpolation
with respect to representer theorems.
%In the proof, just follow along the same
%lines and choose $H = (X\trans X)^{-1} Z_0$ and bound
%$\Omega(W_\gamma)$ from above by $\Omega(X)$. Then we get property
%(4.9) in Lemma 4.1.  When $X$ is not full-rank, use a continuity
%argument to obtain the property.

Thus, we may focus on the interpolation problems \eqref{eq:mtl} and
\eqref{eq:matrix-int}. First of all, observe that, by definition, 
problems of the type \eqref{eq:mtl} include
those of type \eqref{eq:matrix-int}. Conversely, 
consider a set of constraints of the type
\eqref{eq:mtl} with one input per task ($m_t = 1, ~\forall t \in \N_n$)
and not all input vectors collinear. Then any matrix $W$
such that each $w_t$ lies on a fixed hyperplane perpendicular to $x_{t1}$ satisfies
these constraints. At least two of these hyperplanes do not coincide, whereas each constraint in
\eqref{eq:matrix-int} implies that all vectors $w_t$ lie on the same hyperplane.
Therefore, the class of problems \eqref{eq:mtl} is strictly larger than the class
\eqref{eq:matrix-int}.

However, it turns out that with regard to representer theorems of the form
\eqref{eq:genRT} there is no distinction between the two types of problems.
In other words, the representer theorem holds for the same
regularizers $\Omega$, independent of whether each task has its own
sample or not. More importantly, we can connect the existence of
representer theorems to a geometric property of the regularizer, in a
way analogous to property \eqref{eq:geom} in Section \ref{sec:vector}.
These facts are stated in the following lemma.

\begin{lemma}
The following statements are equivalent:
\begin{enumerate}
\item[{\rm (a):}] Problem \eqref{eq:matrix-int} admits a solution of the form \eqref{eq:genRT}, 
for every data set $\{(x_i,y_{ti}): i \in \N_m, t \in \N_n \} \subseteq
\Mb_{d,n} \times \Mb_{n,n}$ and every $m \in \N$, such that the interpolation constraints are satisfiable.

\item[{\rm (b):}] Problem \eqref{eq:mtl} admits a solution of the form \eqref{eq:genRT}, 
for every data set $\{(x_{ti},y_{ti}): i \in \N_{m_t}, t \in \N_n \} \subseteq
\R^d \times \R$ and every $m_t \in \N$, such that the interpolation constraints are satisfiable.

\item[{\rm (c):}] The function $\Omega$ satisfies the property
\begin{align} 
&& \Omega(W+P) \geq \Omega(W) && \forall~ W,P \in \Mb_{d,n} ~\text{such that}~ W\trans P = 0 \, .
\label{eq:matrix_geom0} 
\end{align}

\end{enumerate}
\label{lem:matrix1}
\end{lemma}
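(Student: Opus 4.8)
The plan is to establish the cyclic chain $(c)\Rightarrow(b)\Rightarrow(a)\Rightarrow(c)$, closely following the template of Lemma \ref{lem:1case} from the vector setting, with the orthogonal decomposition $w=\wb+p$ of a single vector replaced by a column-wise orthogonal decomposition of the whole matrix. The geometric property \eqref{eq:matrix_geom0} plays here exactly the role that \eqref{eq:geom} played there, and its sufficiency half has essentially already been carried out for the trace norm in Theorem \ref{thm:rep_trace_norm}.

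First I would prove $(c)\Rightarrow(b)$ by the decomposition argument of Theorem \ref{thm:rep_trace_norm}. Given data for \eqref{eq:mtl} and a solution $\Wh$, set $\calL:=\Span\{x_{si}:s\in\N_n,i\in\N_{m_s}\}$ and split each column as $\wh_t=\wb_t+p_t$ with $\wb_t\in\calL$ and $p_t\in\calL^\perp$, so that $\Wh=\Wb+P$. Since every $\wb_t$ is orthogonal to every $p_s$, we have $\Wb\trans P=0$, and \eqref{eq:matrix_geom0} gives $\Omega(\Wh)\geq\Omega(\Wb)$; meanwhile each constraint is preserved because $\wh_t\trans x_{ti}=\wb_t\trans x_{ti}$ when $p_t\perp x_{ti}$. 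Hence $\Wb$ is a solution of the form \eqref{eq:genRT}. The implication $(b)\Rightarrow(a)$ is then immediate: problems of type \eqref{eq:matrix-int} are precisely those of type \eqref{eq:mtl} in which all tasks share the same inputs ($x_{ti}=x_i$), and for such problems the representation \eqref{eq:genRT} simply says each column lies in $\Span\{x_i:i\in\N_m\}$.

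The substance of the lemma, and the step I expect to be the main obstacle, is the converse $(a)\Rightarrow(c)$: I must manufacture an interpolation problem of the \emph{restricted} type \eqref{eq:matrix-int} whose feasible set is exactly $\{W+P:W\trans P=0\}$ and whose only in-span solution is $W$ itself. Given $W\neq0$, I would set $\calM:=\range(W)=\Span\{w_t:t\in\N_n\}$, choose inputs $x_1,\dots,x_m$ forming a basis of $\calM$, and set the outputs $y_{ti}:=w_t\trans x_i$. Feasibility of a competitor $Z$ then reads $(z_t-w_t)\trans x_i=0$ for all $i$, i.e.\ $z_t\in w_t+\calM^\perp$; in particular every $W+P$ with $W\trans P=0$ (equivalently, every column of $P$ orthogonal to $\calM$) is feasible. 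Admissibility supplies a solution $\Wh$ with columns in $\Span\{x_i\}=\calM$, while feasibility forces $\wh_t-w_t\in\calM^\perp$; since also $\wh_t-w_t\in\calM$, the decisive use of orthogonality $\calM\cap\calM^\perp=\{0\}$ collapses this to $\wh_t=w_t$, so $\Wh=W$ is optimal and \eqref{eq:matrix_geom0} follows. The degenerate case $W=0$ is handled separately by taking a single input $x_1=0$ with $y_{t1}=0$, whence every matrix is feasible, the in-span solution must be $0$, and so $0$ is a global minimizer of $\Omega$, giving \eqref{eq:matrix_geom0} trivially.
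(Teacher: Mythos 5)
Your proposal is correct and follows essentially the same route as the paper: the same cycle of implications, the identical column-wise decomposition $\wh_t = \wb_t + p_t$ for (c)$\implies$(b), the trivial (b)$\implies$(a), and for (a)$\implies$(c) the same construction as the paper, which takes the columns of $W$ themselves as inputs (so the feasible set is $\{Z : Z\trans W = W\trans W\}$ and the range/null-space argument forces $\hat Z = W$ without any degenerate case). Your only deviation --- using a basis of $\range(W)$ instead of the columns of $W$, which then requires the separate $W=0$ case you correctly handle --- is cosmetic.
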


\begin{proof}
We will show that (a) $\implies$ (c), (c) $\implies$ (b) and (b) $\implies$ (a).

[(a) $\implies$ (c)] ~ Consider any $W \in \Mb_{d,n}$. 
Choose $m=n$ and the input data to be the columns of $W$.
In other words, consider the problem
$$
\min\{\Omega(Z): Z \in \Mb_{d,n}, Z\trans W = W \trans W \} \, .
$$ 
By hypothesis, there exists a solution ${\hat Z} = W C$ 
for some $C \in \Mb_{n,n}$. Since $({\hat Z} - W)\trans W = 0$,
all columns of ${\hat Z} - W$ have to belong to the null space
of $W$. But, at the same time, they have to lie in the range of $W$
and hence we obtain that ${\hat Z} = W$.
Therefore, we obtain property \eqref{eq:matrix_geom0} after the variable change 
$P = Z-W$. 

[(c) $\implies$ (b)] ~ Consider arbitrary $x_{ti} \in \R^d,y_{ti} \in \calY, 
i \in \N_{m_t}, t \in \N_n,$ and let $\Wh$ be a solution to problem \eqref{eq:mtl}. We can
decompose the columns of $\Wh$ as $\wh_t = \wb_t + p_t$ where
$\wb_t \in {\cal L} := {\rm span}\{x_{si}, i \in \N_{m_s}, s \in \N_n \}$,
and $p_t \in {\cal L}^\perp, ~ \forall t \in \N_n$.  By hypothesis $\Omega(\Wh) \geq 
\Omega(\Wb)$. Since $\Wh$ interpolates the data, so does $\Wb$ and therefore $\Wb$ is a
solution to \eqref{eq:mtl}.

[(b) $\implies$ (a)] ~ Trivial, since any problem of type \eqref{eq:matrix-int}
is also of type \eqref{eq:mtl}.
\end{proof}

The above lemma provides us with a criterion for characterizing all
regularizers satisfying representer theorems of the form
\eqref{eq:genRT}, in the context of problems \eqref{eq:mtl} or
\eqref{eq:matrix-int}.  
Our objective will be to obtain a functional form analogous to
\eqref{eq:functional_vector} that describes functions satisfying property
\eqref{eq:matrix_geom0}. This property does not have a simple geometric
interpretation, unlike \eqref{eq:geom} which describes functions with
spherical contours. The reason is that the matrix product in the
constraint is more difficult to tackle than an inner product. 

%Theorem \ref{thm:matrix2} provides a full characterization of the
%class of functions described by property \eqref{eq:matrix_geom0}. This
%class extends significantly beyond the set of monotone spectral
%functions, which includes the trace norm and for which representer
%theorems are known to hold -- see \citep{ml}.

Similar to the Hilbert space setting \eqref{eq:functional_vector},
where we required $h$ to be a nondecreasing real function, the
functional description of the regularizer now involves the notion of a
{\em matrix nondecreasing} function.

\begin{definition}
\label{def:2}
We say that the function $h: \Sb^n_+ \rightarrow \R$ is nondecreasing
in the order of matrices if $h(A) \leq h(B)$ for all $A,B \in \Sb^n_+$
such that $A \preceq B$. \label{def:1}
\end{definition}

\begin{theorem}
Let $d,n \in \N$ with $d \geq 2n$. The differentiable function $\Omega : \Mb_{d,n} \to \R$
satisfies property \eqref{eq:matrix_geom0}  
if and only if there exists a matrix nondecreasing
function $h : \Sb_+^n \to \R$ such that 
\begin{align}
&&&& \Omega(W) = h(W\trans W), && \forall~ W\in\Mb_{d,n}.
\label{eq:functional_matrix}
\end{align}
\label{thm:matrix2}
\end{theorem}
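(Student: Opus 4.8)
The plan is to follow the blueprint of the vector case (Theorem \ref{thm:main1}), replacing the rotation argument by its matrix analogue and exploiting the slack $d \geq 2n$ at two separate points. The sufficiency direction is immediate: if $\Omega(W) = h(W\trans W)$ with $h$ matrix nondecreasing and $W\trans P = 0$, then $(W+P)\trans(W+P) = W\trans W + P\trans P \succeq W\trans W$, so $\Omega(W+P) = h((W+P)\trans(W+P)) \geq h(W\trans W) = \Omega(W)$, which is \eqref{eq:matrix_geom0}. All the work is in the converse.

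First I would extract a gradient condition. Fixing $W$ and any $P$ with $W\trans P = 0$, property \eqref{eq:matrix_geom0} applied to $\pm tP$ shows that $t=0$ minimizes $t \mapsto \Omega(W+tP)$, whence $\lb \nabla\Omega(W),P\rb = 0$ for every $P$ with $W\trans P = 0$. Such $P$ are exactly the matrices whose columns lie in $\range(W)^\perp$, and their columns may be chosen independently, so this forces every column of $\nabla\Omega(W)$ into $\range(W)$; equivalently $\nabla\Omega(W) = W M(W)$ for some $M(W) \in \Mb_{n,n}$. This is the matrix analogue of \eqref{eq:grad_zero}.

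Next comes the functional form. Call a $C^1$ path $z:[0,1]\to\Mb_{d,n}$ \emph{good} if $z(\lambda)\trans z'(\lambda) = 0$ for all $\lambda$. Since the columns of $\nabla\Omega(z(\lambda))$ lie in $\range(z(\lambda))$ while those of $z'(\lambda)$ lie in $\range(z(\lambda))^\perp$, we get $\frac{d}{d\lambda}\Omega(z(\lambda)) = \lb \nabla\Omega(z(\lambda)),z'(\lambda)\rb = 0$, so $\Omega$ is constant along good paths; and $z\trans z'=0$ forces $\frac{d}{d\lambda}(z\trans z)=0$, so a good path stays inside one fibre $\{W : W\trans W = A\}$. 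The crux is the converse connectivity: \emph{any two matrices with the same Gram matrix are joined by good paths}. The building blocks are the flows $z(\lambda)=e^{\lambda D}W$ with $D$ skew and $W\trans D W = 0$; a short computation gives $z(\lambda)\trans z'(\lambda) = W\trans D W = 0$, so these are good, and writing $D$ in blocks adapted to $\range(W)\oplus\range(W)^\perp$ one checks that $W\trans D W = 0$ exactly kills the block of $D$ that rotates \emph{within} $\range(W)$. Thus the good generators comprise the $\range(W)\leftrightarrow\range(W)^\perp$ and within-$\range(W)^\perp$ rotations, but \emph{not} the within-$\range(W)$ rotations. This is the main obstacle, and it is precisely where $d\geq 2n$ enters: since $\rank W \leq n$ we have $\dim\range(W)^\perp = d-\rank W \geq n \geq \rank W$, so the complement is large enough to host a copy of $\range(W)$, and the missing within-$\range(W)$ rotations are recovered as commutators of the available generators (concretely: rotate $\range(W)$ out into this copy, twist there, and rotate back). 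A Chow--Rashevskii / bracket-generating argument then shows the good flows connect the whole fibre, which is itself connected as a single $SO^d$-orbit. Granting this, $\Omega$ is constant on each fibre, so $h(A) := \Omega(W)$ for any $W$ with $W\trans W = A$ is well defined on all of $\Sb^n_+$ (every $A$ is realised, e.g. by the matrix with top block $A^{1/2}$ and zeros below, since $d\geq n$), and $\Omega(W)=h(W\trans W)$.

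Finally I would check that $h$ is matrix nondecreasing, which again uses $d\geq 2n$. Given $A\preceq B$ in $\Sb^n_+$, stack the blocks in disjoint rows: let $W$ have top $n\times n$ block $A^{1/2}$ and let $P$ carry $(B-A)^{1/2}$ in rows $n+1,\dots,2n$, all other entries zero. Then $W\trans P = 0$ (disjoint supports, needing $d\geq 2n$), $W\trans W = A$ and $(W+P)\trans(W+P) = A+(B-A) = B$, so property \eqref{eq:matrix_geom0} yields $h(B) = \Omega(W+P) \geq \Omega(W) = h(A)$. This establishes \eqref{eq:functional_matrix} and completes the plan. I expect the fibre-connectivity step to be the delicate part, both in making the commutator/detour construction rigorous and in tying the hypothesis $d\geq 2n$ to the dimension count $\dim\range(W)^\perp \geq \rank W$.
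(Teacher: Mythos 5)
Your proposal is correct in substance and shares the paper's overall architecture: the derivation of the gradient identity $\lb \nabla\Omega(W),P\rb = 0$ whenever $W\trans P=0$ (your reformulation $\nabla\Omega(W)=WM(W)$ is equivalent to the paper's \eqref{eq:equiv2}), constancy of $\Omega$ along paths with $Z(\lambda)\trans Z'(\lambda)=0$, the definition of $h$ by evaluating $\Omega$ on one representative per fibre, and your final stacking argument for monotonicity of $h$ (placing $A^{\frac{1}{2}}$ and $(B-A)^{\frac{1}{2}}$ in disjoint row blocks) is verbatim the paper's. Where you genuinely diverge is at the crux you yourself flag: fibre connectivity. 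The paper never invokes Chow--Rashevskii or bracket generation; it connects the fibre by an explicit \emph{finite} sequence of good paths. Writing $W=\sum_{i\in\N_r}\sigma_i u_i v_i\trans$, it moves one left singular vector at a time along $Z(\lambda)=\sigma_1 R e^{\lambda D}e_1 v_1\trans+\sum_{i=2}^r \sigma_i u_i v_i\trans$ with $R=(u_1,u_{r+1},\dots,u_d)$ and $D$ skew-symmetric; goodness is the one-line computation $Z(\lambda)\trans Z'(\lambda)=\sigma_1^2\, v_1 (e_1\trans D e_1) v_1\trans=0$ since $D_{11}=0$. This turns your observation that good generators cannot twist \emph{within} $\range(W)$ from an obstacle into a non-issue: those rotations are never needed, because to reach the target frame $\tau_i=Yv_i$ (any $Y$ with $Y\trans Y=I$) the paper passes through intermediate unit vectors $z_1=u_1$ and $z_i\perp z_1,\dots,z_{i-1},u_{i+1},\dots,u_r,\tau_1,\dots,\tau_{i-1}$, first replacing each $u_i$ by $z_i$ and then each $z_i$ by $\tau_i$, every replacement being a single good rotation of the above form; the existence of the $z_i$ is exactly where $d\geq 2n$ enters (each $z_i$ must avoid at most $2r-2\leq 2n-2<d$ prescribed directions). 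This is precisely your parenthetical ``rotate out into the copy, twist there, rotate back,'' made rigorous and elementary. To firm up your proof you should therefore promote that parenthesis to the main argument and drop the sub-Riemannian appeal: as written, the bracket-generating step is the one unproven (and hardest) part --- you would need to verify smoothness and bracket generation of the good distribution on each Stiefel-type fibre, and general horizontal curves sit awkwardly with the theorem's mere differentiability hypothesis on $\Omega$, whereas concatenations of the explicit rotation flows are piecewise $C^1$ and need nothing beyond \eqref{eq:equiv2}. Your two uses of $d\geq 2n$ (hosting a copy of $\range(W)$ in the complement, and the block-stacking for monotonicity) are exactly the paper's two uses.
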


\begin{proof}
We first assume that $\Omega$ satisfies property \eqref{eq:matrix_geom0}. 
From this property it follows that, for all $W,P \in \Mb_{d,n}$ with 
$W\trans P=0$,
\beq 
\lb \nabla \Omega(W), P\rb = 0. 
\label{eq:equiv2} 
\eeq
To see this, observe that if the matrix $W\trans P$ is zero then, 
%so is $W\trans (\varepsilon P)$ for all $\varepsilon \in \R$. Hence, 
for all $\varepsilon > 0$, we have that
$$
\frac{\Omega(W+\varepsilon P) - \Omega(W)}{\varepsilon} \geq 0.
$$
Taking the limit as $\varepsilon \to 0^+$ we obtain that $\lb \nabla
\Omega(W), P\rb \geq 0$. Similarly, choosing $\varepsilon < 0$ we obtain
that $\lb \nabla \Omega(W), P\rb \leq 0$ and equation \eqref{eq:equiv2}
follows.

Now, consider any matrix $W \in \Mb_{d,n}$. Let $r = \rank(W)$ and
let us write $W$ in a singular value decomposition as follows
\beq
W = \sum_{i\in \N_r} \sigma_i \, u_i v_i\trans \,,
\eeq
where $\sigma_1 \geq \sigma_2 \geq \dots \geq \sigma_r > 0$ are the
singular values and $u_i \in \R^d$, $v_i \in \R^n,$ $i \in \N_r$, sets of
singular vectors, so that $u_i\trans u_j = v_i \trans v_j =
\delta_{ij}$, $\forall i,j \in \N_r$. Also, let $u_{r+1}, \dots , u_{d} \in
\R^d$ be vectors that together with $u_1,
\dots, u_r$ form an orthonormal basis of $\R^d$.  Without loss of
generality, let us pick $u_1$ and consider any {\em unit} vector $z$
{\em orthogonal} to the vectors $u_2, \dots, u_r$. Let $k=d-r+1$ and $q
\in \R^{k}$ be the unit vector such that
\beq
z = R q,
\eeq
where $R =\left( u_1, u_{r+1},\dots,u_d \right)$. We can complete $q$
by adding $d-r$ columns to its right in order to form an orthogonal
matrix $Q \in \Ob^{k}$ and, since $d>n$, we may select these columns so that
$\det(Q) =1$.  Furthermore, we can write this matrix as $Q = e^D$ with
$D \in \Mb_{k,k}$ a skew-symmetric matrix (see \citep[Example
6.2.15]{horn2}).

We also define the path $Z: [0,1] \rightarrow \Mb_{d,n}$ as
\begin{align*}
&&&&&& Z(\lambda) = 
\sigma_1 
%\left( u_1, u_{r+1},\dots,u_d \right) 
R e^{\lambda D} e_1 v_1\trans +
\sum_{i=2}^r \sigma_i \, u_i v_i\trans
&&&& \forall \lambda \in [0,1],
\end{align*}
where $e_1$ denotes the vector $\left(1, 0, \dots, 0 \right)\trans$.
In other words, we fix the singular values, the right singular vectors and the $r-1$ left 
singular vectors $u_2, \dots, u_r$ and only allow the first left singular vector to vary.
This path has the properties that $Z(0) = W$ and $Z(1) = 
\sigma_1 z v_1\trans + \sum_{i=2}^r \sigma_i \, u_i v_i\trans$.

By construction of the path, it holds that
$$
Z'(\lambda) = \sigma_1 R
%\left[ u_1, u_{r+1},\dots,u_d \right] 
e^{\lambda D} D e_1 v_1\trans
$$
and hence
$$ 
Z(\lambda)\trans Z'(\lambda) =
\left( \sigma_1 R
%\left[ u_1, u_{r+1},\dots,u_d \right] 
e^{\lambda D} e_1 v_1\trans \right)\trans
\sigma_1 R
%\left[ u_1, u_{r+1},\dots,u_d \right] 
e^{\lambda D} D e_1 v_1\trans = \sigma_1^2 \, v_1 e_1\trans D e_1 v_1\trans = 0 \,,
$$
for every $\lambda \in [0,1]$, because $D_{11}=0$. 
Hence, using equation \eqref{eq:equiv2},
we have that
$$
\lb\nabla \Omega(Z(\lambda)), Z'(\lambda)\rb = 0
$$
and, since $\dfrac{d \Omega(Z(\lambda))}{d \lambda} = \left\langle\nabla \Omega(Z(\lambda)),
Z'(\lambda)\right\rangle$, we conclude that $\Omega(Z(\lambda))$ equals a constant
independent of $\lambda$. In particular, $\Omega(Z(0)) = \Omega(Z(1))$, that
is, 
$$
\Omega(W) = 
\Omega \left(\sigma_1 z v_1\trans + \sum_{i=2}^r \sigma_i \, u_i v_i\trans \right) \,.
$$ In other words, if we fix the singular values of $W$, the right
singular vectors and all the left singular vectors but one, $\Omega$
does not depend on the remaining left singular vector (because the
choice of $z$ is independent of $u_1$).

In fact, this readily implies that $\Omega$ does not depend on the
left singular vectors at all. Indeed, fix an arbitrary $Y \in
\Mb_{d,n}$ such that $Y\trans Y=I$. Consider the matrix 
$Y(W\trans W)^{\frac{1}{2}}$, which can be written using the same
singular values and right singular vectors as $W$. That is, $$
Y(W\trans W)^{\frac{1}{2}} = \sum_{i\in \N_r} \sigma_i \, \tau_i
v_i\trans \,, $$ where $\tau_i = Y v_i,~\forall i \in \N_r$. Now, we select
unit vectors $z_1, \dots, z_r$ as follows:
\begin{align}
z_1 & = u_1 \\
z_2 & \perp z_1, u_3, \dots, u_r, \tau_1 \\
& \vdots \quad \vdots \\
z_r & \perp z_1, \dots, z_{r-1}, \tau_1, \dots, \tau_{r-1} \,.
\end{align}
This construction is possible since $d\geq 2n$. Replacing
successively $u_i$ with $z_i$ and then $z_i$ with $\tau_i$, 
$\forall i \in \N_r$, and applying the invariance property, we obtain that
\begin{eqnarray}
\Omega(W) & = & \Omega \left(\sum_{i\in\N_r} \sigma_i \, u_i v_i\trans \right) \\
~& = &\Omega \left( \sigma_1 z_1 v_1\trans + \sigma_2 z_2 v_2\trans + 
\sum_{i=3}^r \sigma_i \, u_i v_i\trans \right) \\
~& ~ & \qquad \vdots \qquad \vdots \\
~& = &\Omega \left( \sum_{i\in\N_r} \sigma_i \, z_i v_i\trans \right) \\
~& = & \Omega \left( \sigma_1 \, \tau_1 v_1\trans + 
\sum_{i=2}^r \sigma_i \, z_i v_i\trans \right) \\
~& ~ & \qquad \vdots \qquad \vdots \\
~& = & \Omega \left( \sum_{i\in\N_r} \sigma_i \, \tau_i v_i\trans \right) = \Omega \left( Y(W\trans W)^{\frac{1}{2}} \right) \,.
\end{eqnarray}
Therefore, defining the function $h: \Sb_+^n \rightarrow \R$ as $h(A) =
\Omega(Y A^{\frac{1}{2}})$, we deduce that $\Omega(W) = h(W\trans W)$.

Finally, we show that $h$ is matrix nondecreasing, that is,
$h(A) \leq h(B)$ if $0 \preceq A \preceq B$. For any such $A,B$ and since $d \geq 2n$, we may define 
$W = [A^{\frac{1}{2}},0,0]\trans$,
$P = [0,(B-A)^{\frac{1}{2}},0]\trans$ $\in \Mb_{d,n}$.
Then $W\trans P = 0$, $A = W\trans W$, $B = (W+P)\trans (W+P)$ and thus, by
hypothesis,
$$
h(B) = \Omega(W+P) \geq \Omega(W) = h(A).
$$
This completes the proof in one direction of the theorem.

To show the converse, assume that $\Omega(W) = h(W\trans W)$, where the function $h$ 
is matrix nondecreasing. Then for any $W,P \in \Mb_{d,n}$
with $W\trans P =0$, we have that
$(W+P)\trans (W+P) = W\trans W + P\trans P \succeq W\trans W$ and, so,
$\Omega(W+P) \geq \Omega(W)$, as required.
\end{proof}

%An important class of the above regularizers are those which are convex. 
We conclude this section by providing a necessary and 
sufficient condition on the matrix nondecreasing property of the function $h$.

%%%XXX Massi Question: if h is convex, what about Omega?

\begin{proposition}
Let $h: \Sb_+^n \rightarrow \R$ be differentiable function. The following properties are equivalent: 
\begin{enumerate}
\item[(a)] $h$ is matrix nondecreasing

\item[(b)] the matrix $\nabla h(A) := \left( \frac{\partial h}{\partial a_{ij}}: i,j
\in \N_n \right)$ is positive semidefinite, for every $A\in \Sb^n_+$. 
\end{enumerate}
\label{lem:1}
\end{proposition}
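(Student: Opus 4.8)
The plan is to prove the two implications separately, using the identification of the directional derivative of $h$ with a Frobenius inner product against the gradient matrix. Throughout I would regard $\Sb^n$ as a Hilbert space under $\lb A,B\rb = \trace(AB)$, so that differentiability of $h$ means $\lim_{t\to0} t^{-1}\bigl(h(A+tH)-h(A)\bigr) = \lb \nabla h(A), H\rb$ for every symmetric direction $H$, with $\nabla h(A)\in\Sb^n$ the (symmetric) matrix of partial derivatives.

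For (a) $\Rightarrow$ (b), I would fix $A \in \Sb_+^n$ and an arbitrary $v \in \R^n$, and test the monotonicity hypothesis against the rank-one positive semidefinite direction $H = vv\trans$. Since $vv\trans \succeq 0$, both $A + t\,vv\trans \succeq A$ and $A + t\,vv\trans \in \Sb_+^n$ hold for every $t \ge 0$, so monotonicity gives $h(A+t\,vv\trans) \ge h(A)$. Dividing by $t>0$ and letting $t\to 0^+$, differentiability yields $\lb \nabla h(A), vv\trans\rb \ge 0$, that is $v\trans \nabla h(A)\, v \ge 0$. As $v$ is arbitrary, $\nabla h(A) \succeq 0$. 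Note this argument uses only the one-sided perturbation $t\ge 0$, so it remains valid when $A$ lies on the boundary of $\Sb_+^n$.

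For (b) $\Rightarrow$ (a), I would take $A,B \in \Sb_+^n$ with $A \preceq B$ and join them by the segment $A(t) = (1-t)A + tB$, $t\in[0,1]$, which stays in the convex set $\Sb_+^n$. The scalar function $\phi(t):=h(A(t))$ is differentiable on $[0,1]$ with $\phi'(t)=\lb\nabla h(A(t)),B-A\rb$, so by the mean value theorem there is $\xi\in(0,1)$ with $h(B)-h(A)=\phi(1)-\phi(0)=\lb\nabla h(A(\xi)),B-A\rb$. Since $B-A\succeq 0$ and, by hypothesis, $\nabla h(A(\xi))\succeq 0$, this quantity is nonnegative: writing $B-A=\sum_k\lambda_k\, w_k w_k\trans$ with $\lambda_k\ge 0$ gives $\lb\nabla h(A(\xi)),B-A\rb=\sum_k\lambda_k\, w_k\trans\nabla h(A(\xi))\, w_k\ge 0$ (equivalently, $\trace(MN)\ge 0$ whenever $M,N\succeq 0$). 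Hence $h(B)\ge h(A)$.

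The routine ingredients — the spectral decomposition of $B-A$, the nonnegativity of $\trace(MN)$ for positive semidefinite $M,N$, and the mean value theorem — I would state without detailed computation. The one place needing care, and which I expect to be the main subtlety, is the gradient convention: because $A$ is symmetric, the entries $a_{ij}$ and $a_{ji}$ are not independent, so one must verify that the matrix $\nabla h(A)$ of partial derivatives is genuinely the symmetric representer of the directional derivative under the Frobenius inner product, and that the difference quotients in the first implication are legitimate one-sided derivatives at boundary matrices $A\in\partial\Sb_+^n$ (handled, as noted, by restricting to $t\ge 0$).
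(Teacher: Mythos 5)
Your proposal is correct and follows essentially the same route as the paper: testing monotonicity against rank-one directions $vv\trans$ to get $v\trans \nabla h(A)\,v \geq 0$, and for the converse differentiating $h$ along the segment $(1-t)A + tB$ and using $\lb \nabla h(\cdot), B-A\rb \geq 0$ via the spectral decomposition of $B-A \succeq 0$. Your two refinements --- restricting to one-sided difference quotients $t \geq 0$ (the paper lets $t$ be of either sign, which is delicate for boundary $A$) and flagging the symmetric-gradient convention --- are careful touches but do not change the argument.
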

\begin{proof}
If (a) holds, we choose $x \in \R^n$, $t \in \R$ and note that $$
\frac{h(A+ t x x\trans) - h(A)}{t} \geq 0.
$$
Letting $t$ go to zero gives that $x\trans \nabla h(A) x \geq 0$.

Conversely, if (b) is true we have, for every $x \in \R^n$, that $x\trans \nabla h(A) x
= \lb\nabla h(A), xx\trans\rb \geq 0$ and, so, $\lb\nabla h(A), C\rb \geq 0$ for all $C \in \Sb_+^n$.
For any $A, B \in \Sb^n_+$ such that $A \preceq B$, consider the univariate function 
$g: [0,1] \rightarrow \R$, $g(t) = h(A+t(B-A))$. By the chain rule it is easy to verify 
that $g$ in nondecreasing. Therefore we conclude that $h(A)=g(0) \leq g(1) = h(B)$.
\end{proof}

%%%%%%%%%%%%%%%%%%%%%%%%%%%%%%%%%%%%%%%%%%%%%%%%%%
% EXAMPLES
%%%%%%%%%%%%%%%%%%%%%%%%%%%%%%%%%%%%%%%%%%%%%%%%%%

\subsection{Examples}
\label{sec:examples}
We have briefly mentioned already that functional description
\eqref{eq:functional_matrix} subsumes the special case of {\em monotone spectral
functions}. By spectral functions we simply mean those real-valued
functions of matrices that depend only on the singular values of their
argument. Monotonicity in this case simply means that one-by-one
orderings of the singular values are preserved. In addition, the
monotonicity of $h$ in \eqref{eq:functional_matrix} is a direct
consequence of Weyl's monotonicity theorem \citep[Cor. 4.3.3]{horn},
which states that if $A\preceq B$ then the spectra of $A$ and $B$ are
ordered.

Interesting examples of such functions are the {\em Schatten} $L_p$
{\em norms} and {\em prenorms}, $$
\Omega(W) = \|W\|_p := \|\sigma(W)\|_p \,,
$$ 
where $p \in [0,+\infty)$ and $\sigma(W)$ denotes the
$n$-dimensional vector of the singular values of $W$. For instance, we
have already mentioned in Section \ref{sec:matrix_intro} that the
representer theorem holds when the regularizer is the trace norm (the
$L_1$ norm of the spectrum).  But it also holds for the {\em rank} of
a matrix, which is the $L_0$ prenorm of the spectrum.  Regularization
with the rank is an NP-hard optimization problem but the representer
theorem implies that it can be solved in time dependent on the total
sample size.

If we exclude spectral functions, the functions that remain are
invariant under {\em left} multiplication with an orthogonal
matrix. Examples of such functions are Schatten norms and prenorms
composed with {\em right} matrix scaling, 
\beq
\label{eq:addmassi}
\Omega(W) = \|W M \|_p \,,
\eeq
where $M \in \Sb^n$. In this case, the corresponding $h$ is the
function $S \mapsto \|\sqrt{\sigma(M S M)}\|_p$. To see that this
function is matrix nondecreasing, observe that if $A,B \in \Sb^n_+$
and $A\preceq B$ then $0 \preceq M A M \preceq M B M$ and hence
$\sigma(MAM) \preceq \sigma(MBM)$ by Weyl's monotonicity
theorem. Therefore, $\|\sqrt{\sigma(MAM)}\|_p \leq
\| \sqrt{\sigma(MBM)} \|_p$.

Also, the matrix $M$ above can be used to select a subset of the
columns of $W$. In addition, more complicated structures can be
obtained by summation of matrix nondecreasing functions and by taking
minima or maxima over sets. For example, we can obtain a regularizer
such as $$
\Omega(W) = \min_{\{I_1,\dots,I_K\} \in {\cal P}} \sum_{k\in\N_K} \|W(I_k)\|_1  \, ,
$$ 
where ${\cal P}$ is the set of partitions of $\N_n$ in $K$ subsets
and $W(I_k)$ denotes the submatrix of $W$ formed by just the columns
indexed by $I_k$. This regularizer is an extension of the trace norm
and can be used for learning multiple tasks via dimensionality
reduction on more than one subspaces \citep{ecml}. 

Yet another example of valid regularizer is that considered in \cite[Sec.~3.1]{EMP}, 
which encourages the tasks to be close to each others, namely
$$
\Omega(W) = \sum_{t=1}^n \left\|w_t - \frac{1}{n} \sum_{s=1}^n w_s\right\|^2.
$$ This regularizer immediately verifies property
\eqref{eq:matrix_geom0}, and so by Theorem \ref{thm:matrix2} 
it is a matrix nondecreasing function of $W\trans W$. One can also verify that 
this regularizer is the square of the form \eqref{eq:addmassi} with $p=2$.

%$$
%\Omega(W) = \sum_{s,t=1}^n w_s\trans E_{st} w_t
%$$ where $E_{st} \in \Sb_+^d$. A special case example discussed in
%\cite{EMP} is 

Finally, it is worth noting that the representer theorem does {\em
not} apply to a family of ``mixed'' matrix norms that have been used
in both statistics and machine learning, in formulations such as the
``group Lasso''
\citep{antoniadis,mtl_feat,bakin,Canu,cosso,Obozinski,group_lasso}.  These
norms are of the form $$
\Omega(W) = \|W\|_{p,q} := 
\left( \sum_{i\in\N_d} {\|w^i\|_p}^q \right) ^{\frac{1}{q}} \,,
$$ 
where $w^i$ denotes the $i$-th row of $W$ and $(p,q) \neq (2,2)$.
Typically in the literature, $q$ is chosen equal to one in order to
favor sparsity of the coefficient vectors {\em at the same covariates}.

\section{Conclusion}
We have characterized the classes of vector and matrix regularizers
which lead to certain forms of the solution of the associated
regularization problems. In the vector case, we have proved the
necessity of a well-known sufficient condition for the ``standard
representer theorem'', which is encountered in many learning and
statistical estimation problems. In the matrix case, we have described
a novel class of regularizers which lead to a modified representer
theorem. This class, which relies upon the notion of matrix
nondecreasing function, includes and extends significantly the vector
class. To motivate the need for our study, we have discussed some
examples of regularizers, which have been recently used in
the context of multi-task learning and collaborative filtering.

In the future, it would be valuable to study more in detail special
cases of the matrix regularizers which we have encountered, such as
those based on orthogonally invariant functions. It would also be
interesting to investigate how the presence of additional constraints
affects the representer theorem. In particular, we have in mind the
possibility that the matrix may be constrained to be in a convex cone,
such as the set of positive semidefinite matrices. Finally, we leave
to future studies the extension of the ideas presented here to the
case in which matrices are replaced by operators between two Hilbert
spaces.

\section*{Acknowledgments}
The work of the first and third authors was supported by EPSRC Grant
EP/D052807/1 and by the IST Programme of the European Community, under
the PASCAL Network of Excellence IST-2002-506778. The second author is
supported by NSF grant DMS 0712827.

%%%%%%%%%%%%%%%%%%%%%%%%%%%%%%%%%%%%%%%%%%%%%%%%%%
% APPENDIX
%%%%%%%%%%%%%%%%%%%%%%%%%%%%%%%%%%%%%%%%%%%%%%%%%%

\section*{Appendix}

Here we collect some auxiliary results which are used in the above
analysis.

The first result states a basic property of connectedness through
rotations.
\begin{lemma}
Let $w,v \in \R^d$ and $d \geq 2$. Then there exists $U \in \Ob^d$
with determinant $1$ such that $v = U w$ if and only if $\|w\| =
\|v\|$.
\label{lem:3}
\end{lemma}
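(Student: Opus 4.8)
The plan is to treat the two implications separately, with the ``if'' direction carrying essentially all the content. For the ``only if'' direction I would simply observe that if $v = Uw$ with $U \in \Ob^d$, then $\|v\|^2 = v\trans v = w\trans U\trans U w = w\trans w = \|w\|^2$, so the equality of norms is forced by orthogonality.

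For the converse, suppose $\|w\| = \|v\| =: c$. If $c = 0$ then $w = v = 0$ and $U = I$ (which has determinant $1$) works trivially, so assume $c > 0$. My approach is to build $U$ by prescribing its action on a pair of adapted orthonormal bases. First I would set $e_1 = w/c$ and $f_1 = v/c$, both unit vectors, and extend each (by Gram--Schmidt) to orthonormal bases $\{e_1,\dots,e_d\}$ and $\{f_1,\dots,f_d\}$ of $\R^d$. Defining the linear map $U$ by $U e_i = f_i$ for every $i$, one checks at once that $U$ carries an orthonormal basis to an orthonormal basis, so $U \in \Ob^d$, and that $U w = c\,U e_1 = c f_1 = v$.

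The only remaining issue is the determinant, since the construction so far guarantees merely $\det U \in \{-1,1\}$. This is precisely the step where the hypothesis $d \geq 2$ is needed, and I expect it to be the one point requiring genuine care. If $\det U = 1$ we are done. If $\det U = -1$, I would replace $f_d$ by $-f_d$: because $d \geq 2$, the index $d$ differs from $1$, so this substitution leaves the relation $U e_1 = f_1$ — and hence $Uw = v$ — intact, still yields an orthonormal basis, and flips the sign of $\det U$ to $+1$. Everything else in the argument is routine; the substantive content is confined to verifying that flipping the last basis vector does not disturb $Uw = v$, which is exactly what $d \geq 2$ secures.
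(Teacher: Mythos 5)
Your proof is correct and follows essentially the same route as the paper: both extend $w$ and $v$ to adapted orthonormal bases of $\R^d$, define $U$ as the orthogonal map carrying one basis to the other (the paper writes this as $U = SR^{-1}$ with $R=(w,x_1,\dots,x_{d-1})$, $S=(v,z_1,\dots,z_{d-1})$), and then repair a determinant of $-1$ by flipping the sign of one auxiliary basis vector, which is exactly where $d \geq 2$ enters. Your explicit treatment of the case $w = v = 0$ is in fact slightly more careful than the paper's argument, where $R$ would be singular.
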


\begin{proof}
If $v = U w$ we have that $v\trans v = w\trans w$.
Conversely, if  $\|w\| = \|v\|$, we may
choose orthonormal vectors $\{x_\ell: \ell \in \N_{d-1}\} \perp w$ 
and $\{z_\ell : \ell \in \N_{d-1}\} 
\perp v$ and form the matrices
$R=\begin{pmatrix}w,x_1,\dots,x_{d-1}\end{pmatrix}$ and
$S=\begin{pmatrix}v,z_1,\dots,z_{d-1}\end{pmatrix}$. We have that $R\trans R = S\trans
S$. We wish to solve the equation $U R = S$. For this purpose we
choose $U=S R^{-1}$ and note that $U \in \Ob^d$ because $U\trans U =
(R^{-1})\trans S^T S R^{-1} = (R^{-1})\trans R\trans R R^{-1} = I$.
Since $d \geq 2$, in the case that $\det(U) = -1$ we can simply 
change the sign of one of the $x_\ell$ or $z_\ell$ to get $\det(U) = 1$
as required. 
\end{proof}

The second result concerns the monotonicity of the trace norm.
\begin{lemma}
Let $W,P \in \Mb_{d,n}$ such that $W\trans P=0$. Then $\|W+P\|_1 \geq \|W\|_1$. 
\label{lem:tracenorm}
\end{lemma}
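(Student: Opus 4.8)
The plan is to reduce the trace-norm inequality to a positive semidefinite ordering of the two Gram matrices $W\trans W$ and $(W+P)\trans(W+P)$, and then to exploit the fact that the trace norm is a monotone function of the eigenvalues of this Gram matrix.

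First I would expand the Gram matrix of $W+P$. Since $W\trans P = 0$, taking transposes also gives $P\trans W = (W\trans P)\trans = 0$, and therefore
\[
(W+P)\trans (W+P) = W\trans W + W\trans P + P\trans W + P\trans P = W\trans W + P\trans P .
\]
Because $P\trans P \in \Sb^n_+$, this immediately yields the key ordering $(W+P)\trans(W+P) \succeq W\trans W$. This is the step where the orthogonality hypothesis $W\trans P = 0$ is used, and it is what makes the cross terms vanish.

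Next, I would recall that $\|W\|_1 = \trace (W\trans W)^{\frac{1}{2}} = \sum_{i\in\N_n} \sqrt{\lambda_i}$, where $\lambda_i$ denotes the $i$-th largest eigenvalue of $W\trans W$; that is, the trace norm is a nondecreasing function of the eigenvalues of $W\trans W$. By Weyl's monotonicity theorem \citep[Cor.~4.3.3]{horn}, the ordering $W\trans W \preceq (W+P)\trans(W+P)$ forces the $i$-th largest eigenvalue of $W\trans W$ to be dominated by the $i$-th largest eigenvalue of $(W+P)\trans(W+P)$, for every $i \in \N_n$. Taking square roots termwise and summing over $i$ gives $\|W\|_1 \leq \|W+P\|_1$, as required.

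The only nontrivial ingredient is this eigenvalue monotonicity step; the rest is a direct computation. I expect it to be the main point to get right, although it is immediate from Weyl's theorem (equivalently, from operator monotonicity of the square root, the L\"owner--Heinz inequality, which gives $(W\trans W)^{\frac12} \preceq ((W+P)\trans(W+P))^{\frac12}$ directly and hence the trace inequality). I would also remark that this lemma is precisely the matrix analogue of the geometric property \eqref{eq:geom}: it says the trace norm cannot decrease under an orthogonal perturbation $P$, which is exactly the ingredient driving the representer theorem for the trace norm in Theorem \ref{thm:rep_trace_norm}.
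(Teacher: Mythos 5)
Your proof is correct, and it differs from the paper's in the one nontrivial ingredient. The Gram-matrix expansion is identical: both arguments use $W\trans P = 0$ (hence $P\trans W = 0$) to get $(W+P)\trans(W+P) = W\trans W + P\trans P \succeq W\trans W$. The paper then closes via the \emph{operator} monotonicity of the square root (the L\"owner--Heinz fact cited from Bhatia): $A \succeq B \succeq 0$ implies $A^{\frac{1}{2}} \succeq B^{\frac{1}{2}}$, whence $\trace A^{\frac{1}{2}} \geq \trace B^{\frac{1}{2}}$. You instead use Weyl's monotonicity theorem to get eigenvalue-by-eigenvalue domination $\lambda_i(W\trans W) \leq \lambda_i\bigl((W+P)\trans(W+P)\bigr)$ and then sum the scalar square roots. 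Your route is strictly more elementary: Weyl's theorem follows from the Courant--Fischer variational characterization, whereas operator monotonicity of $t \mapsto t^{\frac{1}{2}}$ is a deeper result of L\"owner theory; and eigenvalue domination is all one needs here, since the trace norm is a symmetric nondecreasing function of the eigenvalues of the Gram matrix. What the paper's heavier tool buys is the stronger operator inequality $(W\trans W)^{\frac{1}{2}} \preceq \bigl((W+P)\trans(W+P)\bigr)^{\frac{1}{2}}$ as an intermediate statement, which you correctly note as an equivalent alternative. Your route also has the advantage of generalizing immediately to all Schatten $L_p$ norms and prenorms --- indeed it is exactly the Weyl-based argument the paper itself uses in Section \ref{sec:examples} to verify monotonicity of $h$ for general spectral regularizers, so your proof unifies the lemma with that later discussion. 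Your closing remark correctly identifies the lemma as the matrix analogue of property \eqref{eq:geom} driving Theorem \ref{thm:rep_trace_norm}.
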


\begin{proof}
It is known that the square root function, $t \mapsto t^\frac{1}{2}$, is {\em matrix monotone}
-- see, for example, \cite[Sec. V.1]{bhatia}.
This means that for any matrices $A,B \in \Sb^n_+$, $A\succeq B$ implies 
$A^\frac{1}{2} \succeq B^\frac{1}{2}$.
Hence, for any matrices $A,B \in \Sb^n_+$, $A\succeq B$ implies
$\trace A^\frac{1}{2} \geq \trace B^\frac{1}{2}$.
We apply this fact to the matrices $W\trans W + P\trans P$ and $P\trans P$ to obtain that
\begin{eqnarray*}
\|W+P\|_1 = \trace((W+P)\trans (W+P))^{\frac{1}{2}} =
\trace(W\trans W + P\trans P)^{\frac{1}{2}} \geq \trace(W\trans W)^{\frac{1}{2}} = \|W\|_1 \,.
\end{eqnarray*}
\end{proof}

%%%%%%%%%%%%%%%%%%%%%%%%%%%%%%%%%%%%%%%%%%%%%%%%%%
%%%%%%%%%%%%%%%%%%%%%%%%%%%%%%%%%%%%%%%%%%%%%%%%%%

\bibliography{representer}

\end{document}